\newcommand{\tends}{\rightarrow}
\newcommand\argmax{\mathop{\mbox{{\rm argmax}}}\limits}
\newcommand\E{\mathop{\mbox{{\rm E}}}\limits}
\newcommand\diag{{\mbox{diag}}}
\newcommand\tr{{\mbox{tr}}}
\newcommand\rank{{\mbox{rank}}}
\newcommand{\bfSigma}{{\bf \Sigma}}
\newcommand{\bfOmega}{{\bf \Omega}}
\newcommand{\bfA}{{\bf A}}
\newcommand{\bfB}{{\bf B}}
\newcommand{\bfC}{{\bf C}}
\newcommand{\bfD}{{\bf D}}
\newcommand{\bfF}{{\bf F}}
\newcommand{\bfG}{{\bf G}}
\newcommand{\bfH}{{\bf H}}
\newcommand{\bfI}{{\bf I}}
\newcommand{\bfL}{{\bf L}}
\newcommand{\bfQ}{{\bf Q}}
\newcommand{\bfR}{{\bf R}}
\newcommand{\bfS}{{\bf S}}
\newcommand{\bfT}{{\bf T}}
\newcommand{\bfU}{{\bf U}}
\newcommand{\bfV}{{\bf V}}
\newcommand{\bfa}{{\bf a}}
\newcommand{\bfb}{{\bf b}}
\newcommand{\bfe}{{\bf e}}
\newcommand{\bff}{{\bf f}}
\newcommand{\bfu}{{\bf u}}
\newcommand{\bfx}{{\bf x}}
\newcommand{\bfy}{{\bf y}}
\newcommand{\bfz}{{\bf z}}
\newcommand{\calI}{{\cal I}}
\newcommand{\calN}{{\cal N}}
\newcommand{\calX}{{\cal X}}
\newcommand{\calU}{{\cal U}}
\newcommand{\spsdM}{\mathbb{S}_+^M}
\newcommand{\tp}{{\rm T}}
\newtheorem{theorem}{Theorem}
\newtheorem{lemma}{Lemma}
\newtheorem{proposition}{Proposition}
\newtheorem{theorem_dup}{Theorem}
\newtheorem{proposition_dup}{Proposition} 
\newcommand{\BlackBox}{\rule{1.5ex}{1.5ex}}  
\newenvironment{proof}{\par\noindent{\bf Proof\ }}{\hfill\BlackBox\\[2mm]}
\newenvironment{packed_enum}{
\begin{enumerate}
  \setlength{\itemsep}{1pt}
  \setlength{\parskip}{0pt}
  \setlength{\parsep}{0pt}
}{\end{enumerate}}
\begin{document}

\title{Learning a Factor Model via Regularized PCA}

\author{Yi-Hao Kao \\ Stanford University \\ yhkao@alumni.stanford.edu \and Benjamin Van Roy \\ Stanford University \\ bvr@stanford.edu}

\date{}

\maketitle

\begin{abstract}
We consider the problem of learning a linear factor model. We propose a regularized form of principal component analysis (PCA) and demonstrate through experiments with synthetic and real data the superiority of resulting estimates to those produced by pre-existing factor analysis approaches. We also establish theoretical results that explain how our algorithm corrects the biases induced by conventional approaches.  An important feature of our algorithm is that its computational requirements are similar to those of PCA, which enjoys wide use in large part due to its efficiency.
\end{abstract}

\section{Introduction}

Linear factor models have been widely used for a long time and with notable success in economics, finance, medicine, psychology, and various other 
natural and social sciences \citep{Harman76}.  In such a model, each observed variable is a linear combination of unobserved common factors plus idiosyncratic noise,
and the collection of random variables is jointly Gaussian.
We consider in this paper the problem of learning a factor model from a training set of vector observations.  In particular, our learning problem entails simultaneously estimating the loadings of each factor and the residual variance of each variable.  We seek an estimate of these parameters that best explains out-of-sample data.  For this purpose, we consider the likelihood of test data that is independent of the training data.  As such, our goal is to design a learning algorithm that maximizes the likelihood of a test set that is not used in the learning process.

A common approach to factor model learning involves application of principal component analysis (PCA).  If the number of factors is known
and residual variances are assumed to be uniform, PCA can be applied to efficiently compute model parameters that maximize likelihood of the
training data \citep{Tipping99}.  In order to simplify analysis, we begin our study with a context for which PCA is ideally suited.
In particular, before treating more general models, we will restrict
attention to models in which residual variances are uniform.  As a baseline among learning algorithms, we consider applying PCA together with cross-validation,
computing likelihood-maximizing parameters for different numbers of factors and selecting the number of factors that maximizes likelihood
of a portion of the training data that is reserved for validation.  We will refer to this baseline as  {\it uniform-residual rank-constrained maximum-likelihood} (URM) estimation.

To improve on URM, we propose \emph{uniform-residual trace-penalized maximum-likelihood} (UTM) estimation.  Rather than
estimating parameters of a model with a fixed number of factors and iterating over the number of factors, this approach maximizes likelihood across models without restricting the
number of factors but instead penalizing the trace of a matrix derived from the model's covariance matrix.  
This trace penalty serves to regularize the model and naturally selects a parsimonious set of factors.  The coefficient
of this penalty is chosen via cross-validation, similarly with the way in which the number of factors is selected by URM.
Through a computational study using synthetic data, we demonstrate that UTM results in better estimates than  URM.  
In particular, we find that UTM requires as little as two-thirds of the quantity of data used by URM to match its performance.
Further, leveraging recent work on random matrix theory, we establish theoretical results that explain how UTM corrects the biases induced by URM.

We then extend UTM to address the more general and practically relevant learning problem in which residual variances are not assumed to be uniform.  To evaluate the resulting algorithm, which we refer to as \emph{scaled trace-penalized maximum-likelihood} (STM) estimation,
we carry out experiments using both synthetic data and real stock price data.  The computational results demonstrate that STM leads to more accurate estimates than alternatives available from prior art.  We also provide an analysis to illustrate how these alternatives can suffer from biases in this nonuniform residual variance setting.

Aside from the aforementioned empirical and theoretical analyses, an important contribution of this paper is in the design of algorithms that make UTM and STM efficient. The UTM approach is formulated as a convex semidefinite program (SDP), which can be solved by existing algorithms such as interior-point methods or alternating direction method of multipliers (see, e.g., \citet{Boyd11}). 
However, when the data dimension is large, as is the case in many relevant contexts, such algorithms can take too long to be practically useful.
This exemplifies a recurring obstacle that arises in the use of SDP formulations to study large data sets.  We propose an algorithm based on PCA that solves the UTM formulation efficiently.  In particular, we found this method to typically require three orders of magnitude less compute time than the alternating direction method of multipliers.  Variations of PCA such as URM have enjoyed wide use to a large extent because of their efficiency, and the computation time required for UTM is essentially the same as that of URM.
STM requires additional computation but remains in reach for problems where the computational costs of URM are acceptable.

Our formulation is related to that of \cite{Chandrasekaran10}, which estimates a factor model using a similar trace penalty.  There are some important differences, however, that distinguish our work.  First, the analysis of \cite{Chandrasekaran10} focuses on establishing perfect recovery of structure in an asymptotic regime, whereas our work makes the point that this trace penalty reduces nonasymptotic bias.  Second, our approach to dealing with nonuniform residual variances is distinctive and we demonstrate through computational and theoretical analysis that this difference reduces bias.  Third, \cite{Chandrasekaran10} treats the problem as a semidefinite program, whose solution is often computationally demanding when data dimension is large.  We provide an algorithm based on PCA that efficiently solves our problem.  The algorithm can also be adapted to solve the formulation of \cite{Chandrasekaran10}, though that is not the aim of our paper.

In addition, there is another thread of research on regularized maximum-likelihood estimation for covariance matrices that relates loosely to this paper. Along this line, \cite{Banerjee08} regularizes maximum-likelihood estimation by the $\ell_1$ norm of the inverse covariance matrix in order to recover a sparse graphical model.  An efficient algorithm called graphical Lasso was then proposed by \citet{Friedman08} for solving this formulation. Similar formulations can also be found in \cite{Yuan07} and \cite{Ravikumar11}, who instead penalize the $\ell_1$ norm of off-diagonal elements of the inverse covariance matrix when computing maximum-likelihood estimates. For a detailed survey, see \citet{Pourahmadi10}. Although our approach shares some of the spirit represented by this line of research in that we also regularize maximum-likelihood estimation by an $\ell_1$-like penalty, the settings are fundamentally different: while ours focuses on a factor model, theirs are based on sparse graphical models. We propose an approach that corrects the bias induced by conventional factor analysis, whereas their results are mainly concerned with accurate recovery of the topology of an underlying graph. As such, their work does not address biases in covariance estimates. On the algorithmic front, we develop a simple and efficient solution method that builds on PCA. On the contrary, their algorithms are more complicated and computationally demanding.
\footnote{The code of our algorithms can be downloaded at: \texttt{http://www.yhkao.com/RPCA-code.zip}.}

\section{Problem Formulation}

We consider the problem of learning a factor model without knowledge of the number of factors.  Specifically,
we want to estimate a $M \times M$ covariance matrix $\bfSigma_*$ from samples ${\bf x}_{(1)}, \ldots, {\bf x}_{(N)} \sim {\cal N}(0, \bfSigma_*)$, where $\bfSigma_*$ is the sum of a symmetric matrix $\bfF_*\succeq 0$ and a diagonal matrix $\bfR_* \succeq 0 $.
These samples can be thought of as generated by a factor model of the form ${\bf x}_{(n)} = {\bf F}_*^{\frac{1}{2}} {\bf z}_{(n)} + {\bf w}_{(n)}$,
where ${\bf z}_{(n)} \sim {\cal N}(0, \bfI )$ represents a set of common factors and ${\bf w}_{(n)} \sim {\cal N}(0, \bfR_* )$ represents residual noise.  The number of factors is represented by $\rank(\bfF_*)$, and it is 
usually assumed to be much smaller than the dimension $M$.

Our goal is to produce based on the observed samples a factor loadings 
matrix $\bfF \succeq 0$ and a residual variance matrix $\bfR \succeq 0 $ such that the resulting factor model best explains out-of-sample data. In particular, we seek a pair of $(\bfF,\bfR)$ such that the covariance matrix $\bfSigma=\bfF+\bfR$ maximizes the average log-likelihood of out-of-sample data: $$L(\bfSigma, \bfSigma_*) \triangleq
\E_{\bfx \sim \calN(0,\bfSigma_*) } \left[\log p\left(\bfx | \bfSigma\right) \right] = -\frac{1}{2} \left(M \log(2\pi) + \log \det(\bfSigma) + \mbox{tr}(\bfSigma^{-1} \bfSigma_*) \right).$$
This is also equivalent to minimizing the Kullback-–Leibler divergence between ${\cal N}(0, \bfSigma_*)$ and ${\cal N}(0, \bfSigma)$.

\section{Learning Algorithms}

Given our objective, one simple approach is to choose an estimate $\bfSigma$ that maximizes in-sample log-likelihood:
\begin{equation}
\log p({\cal X} | \bfSigma)  = -\frac{N}{2} \left(M \log(2\pi) + \log\det(\bfSigma) + \mbox{tr}(\bfSigma^{-1} \bfSigma_{\rm SAM}) \right), \label{eq:logpX}
\end{equation}
where ${\cal X} = \{ \bfx_{(1)},\ldots, \bfx_{(N)} \}$, and we use $\bfSigma_{\rm SAM}= \sum_{n=1}^N \bfx_{(n)} \bfx_{(n)}^{\rm T} / N$
to denote the sample covariance matrix.  Here, the maximum likelihood estimate is simply given by $\bfSigma = \bfSigma_{\rm SAM}$.

The problem with maximum likelihood estimation in this context is that in-sample log-likelihood
does not accurately predict out-of-sample log-likelihood unless the number of samples $N$ far exceeds the dimension $M$.  In fact, when the number of samples $N$ is smaller than the dimension $M$, $\bfSigma_{\rm SAM}$ is ill-conditioned and the out-of-sample log-likelihood is negative infinity. One remedy to such poor generalization involves exploiting factor structure, as we discuss in this section.

\subsection{Uniform Residual Variances}

We begin with a simplified scenario in which the residual variances are assumed to be identical. As we will later see, such simplification facilitates theoretical analysis.
This assumption will be relaxed in the next subsection.

\subsubsection{Constraining the Number of Factors}
\label{sec:cons_num_factor}

Given a belief that the data is generated by a factor model with few factors, one natural approach is to employ maximum likelihood estimation with a constraint on the number of factors. 
Now suppose the residual variances in the generative model are identical, and as a result we impose an additional assumption that $\bfR$ is a multiple $\sigma^2 \bfI$ of the identity matrix.
This leads to an optimization problem
\begin{eqnarray}
\max_{\bfF \in \spsdM, \sigma^2 \in \mathbb{R}_+ }  && \log p({\cal X} | \bfSigma) \label{eq:reg_rank_unif}\\ 
\mbox{s.t.} &&  \bfSigma = \bfF + \sigma^2 \bfI \nonumber  \\
&&  {\rm rank}(\bfF) \leq K \nonumber
\end{eqnarray}
where $\spsdM$ denote the set of all $M\times M$ positive semidefinite symmetric matrices, and $K$ is the exogenously specified number of factors. In this case, we can efficiently compute an analytical solution via principal component analysis (PCA), as established in \citet{Tipping99}.  
This involves first computing an eigendecomposition of the sample covariance matrix $\bfSigma_{\rm SAM} = {\bf B S B}^{\rm T}$,
where ${\bf B} = [ \bfb_1 \quad \ldots \quad \bfb_M ]$ is orthonormal and $\bfS=\diag(s_1,\ldots, s_M)$ with $s_1 \geq \ldots \geq s_M$.  The solution to (\ref{eq:reg_rank_unif}) is then given by
\begin{eqnarray}
\hat{\sigma}^2 &=& \frac{1}{M - K} \sum_{i=K+1}^M s_i \nonumber \\
\hat{\bfF} &=& \sum_{k=1}^K (s_k-\hat{\sigma}^2) \bfb_k \bfb_k^\tp. \label{eq:URM_F}
\end{eqnarray}
In other words, the estimate for residual variance equals the average of the last $M-K$ sample eigenvalues, whereas the estimate for factor loading matrix is spanned by the top $K$ sample eigenvectors with coefficients $s_k-\hat{\sigma}^2$.  We will refer to this method as \emph{uniform-residual rank-constrained maximum-likelihood} estimation, and use $\bfSigma^K_{\rm URM} = \hat{\bfF} + \hat{\sigma}^2\bfI$ to denote the covariance matrix resulting from this procedure. It is easy to see that the eigenvalues of $\bfSigma^K_{\rm URM}$ are $s_1, \ldots, s_K, \hat{\sigma}^2, \ldots, \hat{\sigma}^2$, as illustrated in Figure \ref{fig:eigs}(a).

A number of methods have been proposed for estimating the number of factors $K$ \citep{Akaike87, Bishop98, Minka00, Hirose11}.
Cross-validation provides a conceptually simple approach that in practice works at least about as well as any other.  To obtain best performance
from such a procedure, one would make use of so-called $n$-fold cross-validation.  To keep things simple in our study and comparison of estimation
methods, for all methods we will consider, we employ a version of cross-validation that reserves a single subset of data for validation and selection of $K$.  Details of the procedure
we used can be found in the appendix.  Through selection of $K$, this procedure arrives at a covariance matrix which we will denote by $\bfSigma_{\rm URM}$.

\subsubsection{Penalizing the Trace}

Although (\ref{eq:reg_rank_unif}) can be elegantly solved via PCA, it is unclear that imposing a hard constraint on the number of factors will lead to an optimal estimate.  In particular, one might suspect a ``softer" regularization could improve estimation accuracy. Motivated by this idea, we propose penalizing the trace instead of constraining the rank of the factor loading matrix.  As we shall see in the experiment results and theoretical analysis, such an approach indeed improves estimation accuracy significantly.

Nevertheless, naively replacing the rank constraint of (\ref{eq:reg_rank_unif}) by a trace constraint $\mbox{tr}(\bfF) \leq t$ will result in a non-convex optimization problem, and it is not clear to us whether it can be solved efficiently.  Let us explore a related alternative.  Some straightforward matrix algebra shows that if $\bfSigma = \bfF + \sigma^2 \bfI$ with $\bfF \in \spsdM$ and $\sigma^2 > 0$, then the matrix defined by 
$\bfG = \sigma^{-2}\bfI - \bfSigma^{-1}$ is in ${\mathbb S}^M_+ $, with ${\rm rank}(\bfG) = {\rm rank}(\bfF)$.
This observation, together with the
well-known fact that the log-likelihood of ${\cal X}$ is concave in the {\it inverse} covariance matrix $\bfSigma^{-1}$,
motivates the following convex program:
\begin{eqnarray}
\max_{\bfG \in \spsdM, v \in \mathbb{R}_+} &&  \log p({\cal X} | \bfSigma) \nonumber \\
{\rm s.t.} && \bfSigma^{-1} = v \bfI - \bfG \nonumber \\
&& {\rm tr}( {\bfG} ) \leq t. \nonumber
\nonumber
\end{eqnarray}
Here, the variable $v$ represents the reciprocal of residual variance. Pricing out the trace constraint leads to a closely related problem in which the trace is penalized rather than constrained:
\begin{eqnarray}
\max_{\bfG \in \spsdM, v \in \mathbb{R}_+ }  &&  \log p({\cal X} | \bfSigma )  - \lambda {\rm tr}( {\bfG} ) \label{eq:reg_cvx_penalty_simplified} \\
{\rm s.t.} && \bfSigma^{-1} = v\bfI - \bfG. \nonumber
\nonumber
\end{eqnarray}
We will consider the trace penalized problem instead of the trace constrained problem because it is more convenient to design algorithms that 
address the penalty rather than the constraint. Let $(\hat{\bfG},\hat{v})$ be an optimal solution to (\ref{eq:reg_cvx_penalty_simplified}), and let $\bfSigma_{\rm UTM}^\lambda = (\hat{v}\bfI-\hat{\bfG})^{-1}$ denote the covariance matrix estimate derived from it.  Here, the ``U'' indicates that residual variances are assumed to be \emph{uniform} across variables and ``T'' stands for \emph{trace-penalized}.  

It is easy to see that (\ref{eq:reg_cvx_penalty_simplified}) is a semidefinite program.  As such, the problem can be solved in polynomial time by existing algorithms such as interior-point methods or alternating direction method of multipliers (ADMM). However, when the number of variables $M$ is large, as is the case in many contexts of practical import, such algorithms can take too long to be practically useful.
One contribution of this paper is an efficient method for solving (\ref{eq:reg_cvx_penalty_simplified}), which we now describe. 
The following result motivates the algorithm we will propose for computing $\bfSigma_{\rm UTM}^\lambda$:
\begin{theorem}
\label{thm:UTM}
$\bfSigma_{\rm SAM}$ and $\bfSigma_{\rm UTM}^\lambda$ share the same
trace and eigenvectors, and letting the eigenvalues of the two matrices, sorted in decreasing order, be denoted by $s_1,\ldots,s_M$ and $h_1,\ldots,h_M$, respectively, we have
\begin{equation}
\label{eq:soft_thr}
h_m = \max \left\{ s_m - \frac{2\lambda}{N},\frac{1}{\hat{v}} \right\}, \mbox{ for } m=1,\ldots, M. 
\end{equation}
\end{theorem}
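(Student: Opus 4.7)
The plan is to characterize any optimum of the convex program (\ref{eq:reg_cvx_penalty_simplified}) via its KKT conditions. Since the program is convex in $(\bfG,v)$ over the domain $\{v\bfI-\bfG\succ 0\}$ and Slater's condition is easily verified (e.g., $\bfG=0$ with $v$ small), stationarity together with complementary slackness is both necessary and sufficient for optimality. I would introduce a PSD multiplier $\bfU\succeq 0$ for the constraint $\bfG\succeq 0$, rewrite the in-sample log-likelihood (\ref{eq:logpX}) as a function of $\bfG$ and $v$ through $\bfSigma^{-1}=v\bfI-\bfG$, and differentiate. Using $\partial_v\log\det(v\bfI-\bfG)=\tr((v\bfI-\bfG)^{-1})=\tr(\bfSigma)$ and $\partial_{\bfG}\log\det(v\bfI-\bfG)=-(v\bfI-\bfG)^{-1}=-\bfSigma$, stationarity in $v$ will yield $\tr(\bfSigma_{\rm UTM}^\lambda)=\tr(\bfSigma_{\rm SAM})$ (the trace claim), and stationarity in $\bfG$ together with $\bfU$ will give the matrix identity
$$
\bfSigma_{\rm UTM}^\lambda \;=\; \bfSigma_{\rm SAM} \;-\; \frac{2\lambda}{N}\bfI \;+\; \frac{2}{N}\bfU.
$$

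Next I would argue that $\bfSigma_{\rm SAM}$ and $\bfSigma_{\rm UTM}^\lambda$ share eigenvectors. Complementary slackness gives $\bfU\hat{\bfG}=0$; since $\bfU$ and $\hat{\bfG}=\hat{v}\bfI-(\bfSigma_{\rm UTM}^\lambda)^{-1}$ are symmetric PSD, transposing also yields $\hat{\bfG}\bfU=0$, so $\bfU$ and $\hat{\bfG}$ commute. Because $\hat{\bfG}$ is a polynomial in $\bfSigma_{\rm UTM}^\lambda$, $\bfU$ commutes with $\bfSigma_{\rm UTM}^\lambda$, and the displayed identity then writes $\bfSigma_{\rm SAM}$ as a linear combination of matrices that all commute with $\bfSigma_{\rm UTM}^\lambda$. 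Hence $\bfSigma_{\rm SAM}$ and $\bfSigma_{\rm UTM}^\lambda$ commute and admit a common eigenbasis, giving the eigenvector claim.

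Finally I would project everything onto that common eigenbasis. The matrix identity decouples into scalar equations $h_m = s_m - 2\lambda/N + (2/N)\mu_m$, where $\mu_m\ge 0$ are the eigenvalues of $\bfU$; feasibility $\hat{\bfG}\succeq 0$ forces $h_m\ge 1/\hat{v}$; and complementary slackness reduces to $\mu_m(\hat{v}-1/h_m)=0$ in each coordinate. A brief case analysis (either $\mu_m=0$, whereupon $h_m=s_m-2\lambda/N$ consistent only if this exceeds $1/\hat{v}$, or $h_m=1/\hat{v}$, whereupon $\mu_m\ge 0$ forces $s_m-2\lambda/N\le 1/\hat{v}$) combines to give $h_m=\max\{s_m-2\lambda/N,\,1/\hat{v}\}$, which is (\ref{eq:soft_thr}). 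I expect the main obstacle to be the commutation step in the middle paragraph; once that is in hand, the trace equality, the shared eigenbasis, and the scalar soft-thresholding all fall out with routine algebra.
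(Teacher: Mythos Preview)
Your argument is correct. Both your route and the paper's are KKT-based, but they differ in structure. The paper first proves a standalone lemma (Lemma~\ref{lem:GV}) that solves (\ref{eq:reg_cvx_penalty_simplified}) with $v$ held fixed at an \emph{arbitrary} positive diagonal matrix $\bfV$, using a change of variables $\bfV^{-1/2}\bfG\bfV^{-1/2}$ and then reading off eigenvectors one at a time from the range of $\hat\bfG$; Theorem~\ref{thm:UTM} then specializes to $\bfV=\hat v\bfI$ and adds the $v$-stationarity for the trace. Your approach instead treats $(\bfG,v)$ jointly, derives the single matrix identity $\bfSigma_{\rm UTM}^\lambda=\bfSigma_{\rm SAM}-\tfrac{2\lambda}{N}\bfI+\tfrac{2}{N}\bfU$, and gets the shared eigenbasis from a clean commutation argument ($\bfU\hat\bfG=0\Rightarrow\bfU$ commutes with $\hat\bfG$, hence with $\bfSigma_{\rm UTM}^\lambda$, hence $\bfSigma_{\rm SAM}$ is a sum of matrices commuting with $\bfSigma_{\rm UTM}^\lambda$). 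Your route is shorter and more transparent for this particular theorem; the paper's lemma, in contrast, is stated for general diagonal $\bfV$ and is reused verbatim in the proof of Proposition~\ref{prop:TM-fail}, so its extra generality is not wasted.

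Two minor points to tighten. First, your Slater witness $\bfG=0$ sits on the boundary of the cone constraint $\bfG\succeq 0$; take instead $\bfG=\epsilon\bfI$ with $0<\epsilon<v$ to get a strictly feasible point. Second, the theorem speaks of eigenvalues ``sorted in decreasing order,'' whereas your scalar relation is indexed by the common eigenbasis; you should note that $s\mapsto\max\{s-2\lambda/N,\,1/\hat v\}$ is nondecreasing, so sorting the $s_m$ automatically sorts the $h_m$ the same way.
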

This theorem suggests an algorithm for computing $\bfSigma_{\rm UTM}^\lambda$.  First, we compute the eigendecomposition of $\bfSigma_{\rm SAM} = \bfB \bfS \bfB^\tp$, where $\bfB$ and $\bfS$ are as defined in Section \ref{sec:cons_num_factor}.  This
provides the eigenvectors and trace of $\bfSigma_{\rm UTM}^\lambda$.  To obtain its eigenvalues, we only need to determine the value of $\hat{v}$ such that the eigenvalues given by (\ref{eq:soft_thr}) sum to the desired trace.  This is equivalent to determining the largest integer $K$ such that
$$ s_K - \frac{2\lambda}{N} > \frac{1}{M-K} \left( K \cdot \frac{2\lambda}{N}  + \sum_{m=K+1}^M s_m \right). $$
To see this, note that setting
\begin{eqnarray*}
\hat{v}^{-1} & = & \frac{1}{M-K} \left( K \cdot \frac{2\lambda}{N}  + \sum_{m=K+1}^M s_m \right) \\
h_{m} & = &\left\{ \begin{array}{ll}
s_{m}-\frac{2\lambda}{N} & , m = 1, \ldots, K \\
\hat{v}^{-1} & , m=K+1,\ldots, M
\end{array} \right.
\end{eqnarray*}
uniquely satisfies (\ref{eq:soft_thr}) and ensures $\sum_{m=1}^M h_m = \sum_{m=1}^M s_m$.  Algorithm \ref{alg:rURM} presents this method in greater detail. In our experiments, we found this method to typically require three orders of magnitude less compute time than ADMM.  For example, it can solve a problem of dimension $M = 1000$ within seconds on a workstation, whereas ADMM requires hours to attain the same level of accuracy.  

Also note that for reasonably large $\lambda$, this algorithm will flatten most sample eigenvalues and allow only the largest eigenvalues to remain outstanding, effectively producing a factor model estimate.  Figure \ref{fig:eigs}(b) illustrates this effect.  Comparing Figure \ref{fig:eigs}(a) and Figure \ref{fig:eigs}(b), it is easy to see that URM and UTM primarily differ in the largest eigenvalues they produce: while URM simply retains the largest sample eigenvalues, UTM subtracts a constant $2\lambda/N$ from them.  As we shall see in the theoretical analysis, this subtraction indeed corrects the bias incurred in sample eigenvalues.

\begin{figure}[h]
\centering
	\subfloat[]{\includegraphics[scale=0.7]{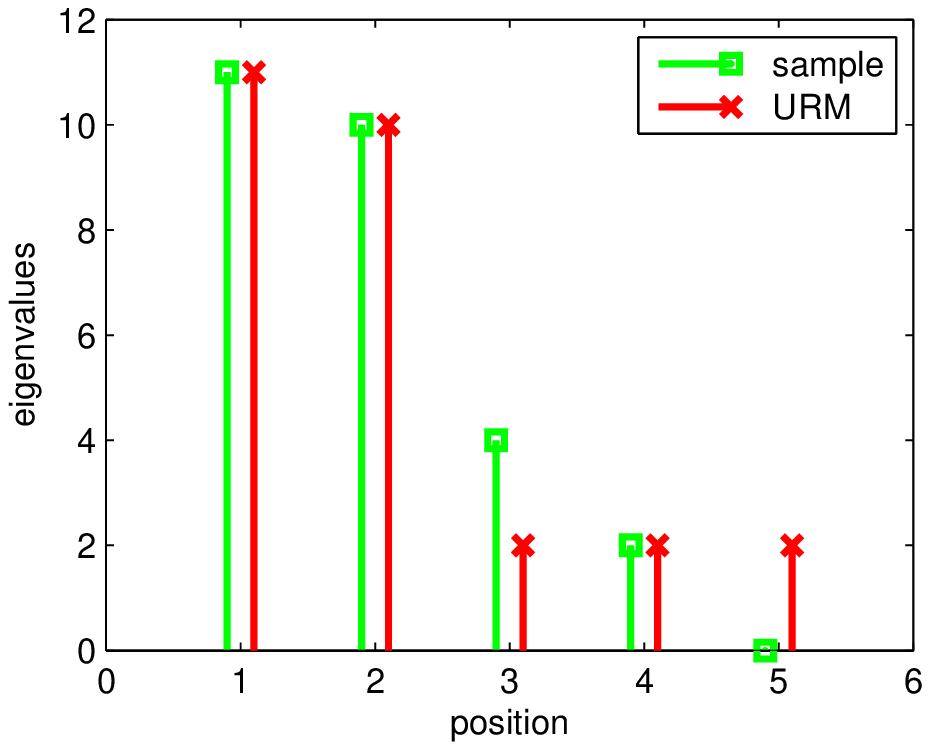}}
	\subfloat[]{\includegraphics[scale=0.7]{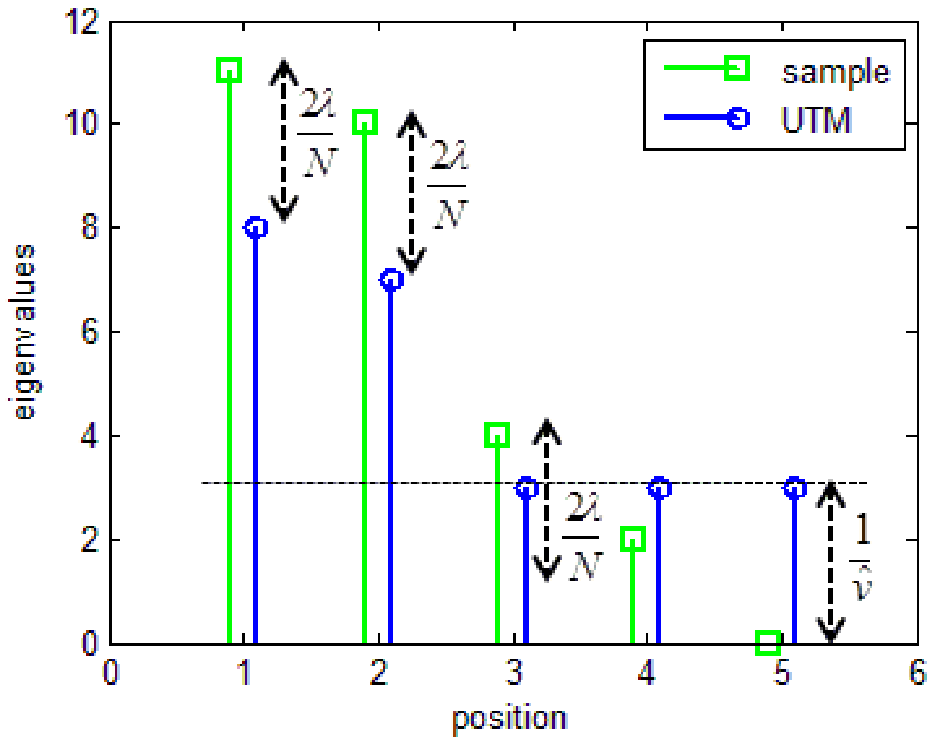}}
\caption{(a) An example of sample eigenvalues and the corresponding eigenvalues of URM estimate, with $M=5$ and $K=2$. URM essentially preserves the top eigenvalues and averages the remaining ones as residual variance. (b) An example of sample eigenvalues and the corresponding eigenvalues of UTM estimate. With a particular choice of $\lambda$, this estimate has two outstanding eigenvalues, but their magnitudes are $2\lambda/N$ below the sample ones. Its residual variance $1/\hat{v}$ is determined in a way that ensures the summation of UTM eigenvalues equal to the sample one.}
\label{fig:eigs}
\end{figure}

Like URM, the most computationally expensive step of UTM lies in the eigendecomposition. 
\footnote{In fact, a full eigendecomposition is not required, as we only need the top $K$ eigenvectors to compute the estimate.} Beyond that, the evaluation of UTM eigenvalues for any given $\lambda$ takes $O(M)$, and is generally negligible.
In our implementation, this regularization parameter $\lambda$ is chosen by cross-validation from a range around $M\hat{\sigma}^2$, whose reasons will become apparent in Section \ref{sec:analysis_uniform}.  We denote by $\bfSigma_{\rm UTM}$ the covariance matrix resulting from this cross-validation procedure.

\begin{algorithm}[h]
\caption{Procedure for computing $\Sigma_{\rm UTM}^\lambda$ \\ {\bf Input: }${\cal X}, \lambda$ \\ {\bfseries Output: }$\bfSigma_{\rm UTM}^\lambda$ }
\label{alg:rURM}
\begin{algorithmic}
\STATE Compute eigendecomposition $\bfSigma_{\rm SAM} = \bfB \bfS \bfB^\tp$ 
\STATE $ v_k^{-1} \leftarrow \frac{1}{M - k}\left( k \cdot \frac{2\lambda}{N} + \sum_{m=k+1}^M s_{m} \right), \quad \forall k=0,1, \ldots, M-1 $
\STATE $K \leftarrow \max \left\{  k: s_{k} - \frac{2 \lambda}{N} > v_k^{-1} \right\}  $ // define $s_0=\infty$
\STATE $ \hat{v} \leftarrow v_K $
\STATE $h_{m} \leftarrow \left\{ \begin{array}{ll}
s_{m}-\frac{2\lambda}{N} & \mbox{if } m \leq K \\
\hat{v}^{-1} & \mbox{otherwise}
\end{array} \right. , \quad \forall m=1,\ldots, M $
\STATE $\bfSigma_{\rm UTM}^\lambda \leftarrow \sum_{k=1}^K (h_k - \hat{v}^{-1} )\bfb_k \bfb_k^{\rm T} + \hat{v}^{-1} \bfI $
\end{algorithmic}
\end{algorithm}

\subsection{Nonuniform Residual Variances}

We now relax the assumption of uniform residual variances and discuss several methods for the general case.
As in the previous subsection, the hyper-parameters of these methods will be selected by cross-validation.

\subsubsection{Constraining the Number of Factors}

Without the assumption of uniform residual variances, the ranked-constrained maximum-likelihood formulation can be written as
\begin{eqnarray}
\max_{\bfF \in \spsdM, \bfR \in \mathbb{D}^M_+ }  && \log p({\cal X} | \bfSigma) \label{eq:reg_rank}\\ 
\mbox{s.t.} &&  \bfSigma = \bfF + \bfR \nonumber  \\
&&  {\rm rank}(\bfF) \leq K \nonumber
\end{eqnarray}
where $\mathbb{D}^M_{+}$ denote the set of all $M\times M$ positive semidefinite diagonal matrices. Unlike (\ref{eq:reg_rank_unif}), this formulation is generally hard to solve, and therefore we consider two widely-used approximate solutions.

The expectation-maximization (EM) algorithm \citep{Rubin82} is arguably the most conventional approach to solving (\ref{eq:reg_rank}), though there is no guarantee that this will result in a global optimal solution.
The algorithm generates a sequence of iterates $\bfF^{\frac{1}{2}} \in \mathbb{R}^{M\times K}$ and $\bfR \in \mathbb{D}^M_+$, 
such that the covariance matrix $\bfSigma = \bfF^{\frac{1}{2}} \bfF^{\frac{\rm T}{2}} + \bfR$ increases the log-likelihood
of ${\cal X}$ with each iteration.  Each iteration involves an estimation step in which we assume the data are generated according to the covariance matrix $\bfSigma = \bfF^{\frac{1}{2}} \bfF^{\frac{\rm T}{2}} + \bfR$, and compute expectations $\E[\bfz_{(n)} | \bfx_{(n)}]$ and $\E[\bfz_{(n)} \bfz_{(n)}^{\rm T} | \bfx_{(n)}]$ for $n = 1,\ldots, N$.  A maximization step then updates $\bfF$ and $\bfR$ based on these expectations. In our implementation, the initial $\bfF$ and $\bfR$ are selected by the MRH algorithm described in the next paragraph. We will denote the estimate produced by the EM algorithm by $\bfSigma^K_{\rm EM}$ and that resulting from further selection of $K$ through cross-validation by $\bfSigma_{\rm EM}$.

A common heuristic for approximately solving (\ref{eq:reg_rank}) without entailing iterative computation is to first compute $\bfSigma^K_{\rm URM}$ by PCA and then take the factor matrix estimate to be the $\hat{\bfF}$ defined in (\ref{eq:URM_F}) and the residual variances to be
$\hat{\bfR}_{m,m} = \left( \bfSigma_{\rm SAM} - \hat{\bfF} \right)_{m,m}$, 
for $m = 1,\ldots,M$.  In other words, $\hat{\bfR}_{m,m}$ is selected so that the diagonal elements of the estimated covariance matrix $\hat{\bfSigma} = \hat{\bfF} + \hat{\bfR}$ are equal to those of the sample covariance matrix.  We will refer to this method as \emph{marginal-variance-preserving rank-constrained heuristic} and denote the resulting estimates by 
$\bfSigma_{\rm MRH}^K$ and $\bfSigma_{\rm MRH}$.

\subsubsection{Penalizing the Trace}
\label{sec:DNRV}

We now develop an extension of Algorithm \ref{alg:rURM} that applies when residual variances are nonuniform. 
One formulation that may seem natural involves replacing $v\bfI$ in (\ref{eq:reg_cvx_penalty_simplified}) with a diagonal matrix
$\bfV \in  \mathbb{D}^M_+$. That is,
\begin{eqnarray}
\max_{\bfG \in \spsdM, \bfV \in \mathbb{D}^M_+ }  &&  \log p({\cal X} | \bfSigma )  - \lambda {\rm tr}( {\bfG} ) \label{eq:TM} \\
{\rm s.t.} && \bfSigma^{-1} = \bfV - \bfG. \nonumber
\end{eqnarray}
Indeed, a closely related formulation is proposed in \citet{Chandrasekaran10}.
However, as we will see in Sections \ref{se:experiments} and \ref{sec:analysis}, solutions to this formulation suffer from bias and do not compete well against the method we will propose next.  That said, let us denote the estimates resulting from solving this formulation by 
$\bfSigma_{\rm TM}^\lambda$ and $\bfSigma_{\rm TM}$, where ``T'' stands for \emph{trace-penalized}. 
Also note that this formulation can be efficiently solved by a straightforward generalization of Theorem \ref{thm:UTM}, though we will not elaborate on this.

Our approach involves componentwise scaling of the data. 
Consider an estimate $\hat{\bfSigma}$ of $\bfSigma_*$. Recall that we evaluate the quality of the estimate using the expected log-likelihood $L(\hat{\bfSigma}, \bfSigma_*)$
of out-of-sample data.
If we multiply each data sample by a matrix $\bfT \in \mathbb{R}^{M\times M}$, the data set becomes $\bfT \calX \triangleq \{ \bfT\bfx_{(1)},  \ldots, \bfT\bfx_{(N)} \}$, where $\bfT \bfx_{(n)} \sim {\cal N}(0, \bfT \bfSigma_* \bfT^{\rm T} )$.  If we also change our estimate accordingly to $\bfT \hat{\bfSigma} \bfT^{\rm T}$ then the new expected log-likelihood becomes
$$ L( \bfT \hat{\bfSigma} \bfT^{\rm T}, \bfT \bfSigma_* \bfT^{\rm T}) = L(\hat{\bfSigma}, \bfSigma_*) - \log \det \bfT. $$
Therefore, as long as we constrain $\bfT$ to have unit determinant, $L( \bfT \hat{\bfSigma} \bfT^{\rm T}, \bfT \bfSigma_* \bfT^{\rm T})$ will be equal to $L(\hat{\bfSigma}, \bfSigma_*)$, suggesting that if $\hat{\bfSigma}$ is a good estimate of $\bfSigma_*$ then $ \bfT \hat{\bfSigma} \bfT^{\rm T}$ is a good estimate of $ \bfT \bfSigma_* \bfT^{\rm T}$.
This motivates the following optimization problem:
\begin{eqnarray}
\max_{\bfG \in \spsdM, v \in \mathbb{R}_+, \bfT \in \mathbb{D}_+^M }  &&  \log p( \bfT {\cal X} | \bfSigma ) - \lambda {\rm tr}( {\bfG} ) \label{eq:reg_STM} \\
{\rm s.t.} && \bfSigma^{-1} = v \bfI - \bfG \nonumber. \\
&& \log \det{\bfT} \geq 0 \nonumber.
\nonumber
\end{eqnarray}
The solution to this problem identifies a componentwise-scaling matrix $\bfT\in \mathbb{D}^M_+$ that allows the data to be best-explained by a factor model with uniform residual variances.
Given an optimal solution, $1/\bfT_{i,i}^2$ should be approximately proportional to the residual variance of the $i$th variable, so that scaling by $\bfT_{i,i}$ makes residual
variances uniform.  Note that the optimization problem constrains $\log \det \bfT$ to be nonnegative rather than zero.  This makes the feasible region convex,
and this constraint is binding at the optimal solution.  
Denote the optimal solution to (\ref{eq:reg_STM}) by $(\hat{\bfG}, \hat{v}, \hat{\bfT})$. Our estimate is thus given by $ \hat{\bfT}^{-1}(\hat{v}\bfI-\hat{\bfG})^{-1} \hat{\bfT}^{-\rm T}$.

The objective function of (\ref{eq:reg_STM}) is not concave in $(\bfG, v, \bfT)$, but is biconcave in $(\bfG, v)$ and $\bfT$.  We solve it by coordinate 
ascent, alternating between optimizing $(\bfG,v)$ and $\bfT$.  This procedure is guaranteed convergence.
In our implementation, we initialize $\bfT$ by $\bfI$. We will denote the resulting estimates by $\bfSigma_{\rm STM}$, where ``ST'' stands for \emph{scaled} and \emph{trace-penalized}.

\section{Experiments}
\label{se:experiments}

We carried out two sets of experiments to compare the performance of aforementioned algorithms. The first is based on synthetic data, whereas the second uses historical prices of stocks that make up the S\&P 500 index.

\subsection{Synthetic Data}
\label{se:synthetic}

We generated two kinds of synthetic data.  The first was generated by a model in which each residual has unit variance.  This data was sampled according to the
following procedure, which takes as input the number of factors $K_*$, the dimension $M$, the factor variances $\sigma_f^2$, and the number of samples $N$:
\begin{packed_enum}
	\item Sample $K_*$ orthonormal vectors ${\bf\phi}_1, {\bf\phi}_2, \ldots, {\bf\phi}_{K_*} \in \mathbb R^{M}$ isotropically.
	\item Sample $f_1, f_2,\ldots, f_{K_*} \sim {\cal N}(0, \sigma_f^2)$.
	\item Let ${\bf F}_*^{\frac{1}{2}} = [ f_1 \phi_1 \quad f_2 \phi_2 \quad \ldots \quad f_{K_*} \phi_{K_*} ]$.
	\item Let $\bfSigma_* = {\bfF}_*^{\frac{1}{2}} {\bfF}_*^{\frac{\rm T}{2}} + \bfI$.
	\item Sample ${\bf x}_{(1)}, \ldots, {\bf x}_{(N)}$ iid from $ {\cal N}(0, \bfSigma_*) $.
\end{packed_enum}
We repeated this procedure one hundred times for each $N \in \{50, 100, 200, 400 \}$, with $M=200, K_*=10$, and $\sigma_f = 5$.  We applied to this data
URM and UTM, since they are methods designed to treat such a scenario with uniform residual variances.  Regularization parameters 
$K$ and $\lambda$ were selected via cross-validation, where about $70\%$ of each data set was used for training and $30\%$ for validation.
Figure $\ref{fig:syn_unif}$(a) plots out-of-sample log-likelihood delivered by the two algorithms.  Performance is plotted as a function of the log-ratio of the 
number of samples to the number of variables, which represents the availability of data relative to the number of variables.  We expect this measure 
to drive performance differences.  UTM outperforms URM in all scenarios.  
The difference is largest when data is scarce.  When data 
is abundant, both methods work about as well.  This should be expected since both estimation methods are consistent.

To interpret this result in a more tangible way, we also plot the \emph{equivalent data requirement} of UTM in Figure $\ref{fig:syn_unif}$(b). This metric is defined as the portion of training data required by UTM to match the performance of URM. As we can see, UTM needs as little as 67\% of the data used by the URM to reach the same estimation accuracy.

\begin{figure}[h]
\centering
	\subfloat[]{\includegraphics[scale=0.7]{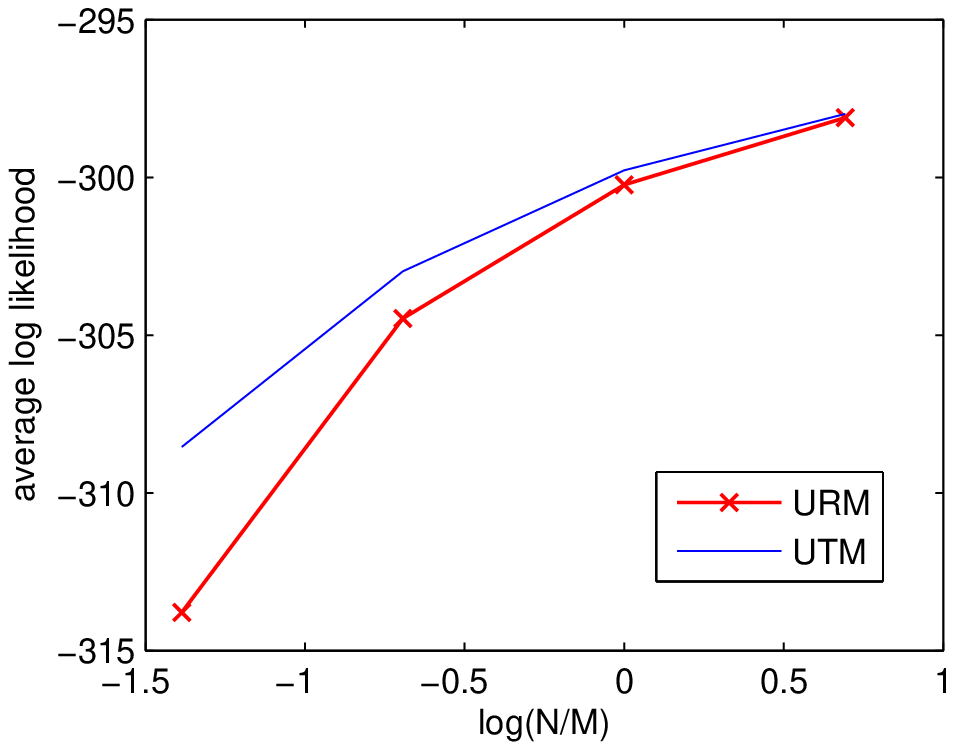}}
	\subfloat[]{\includegraphics[scale=0.7]{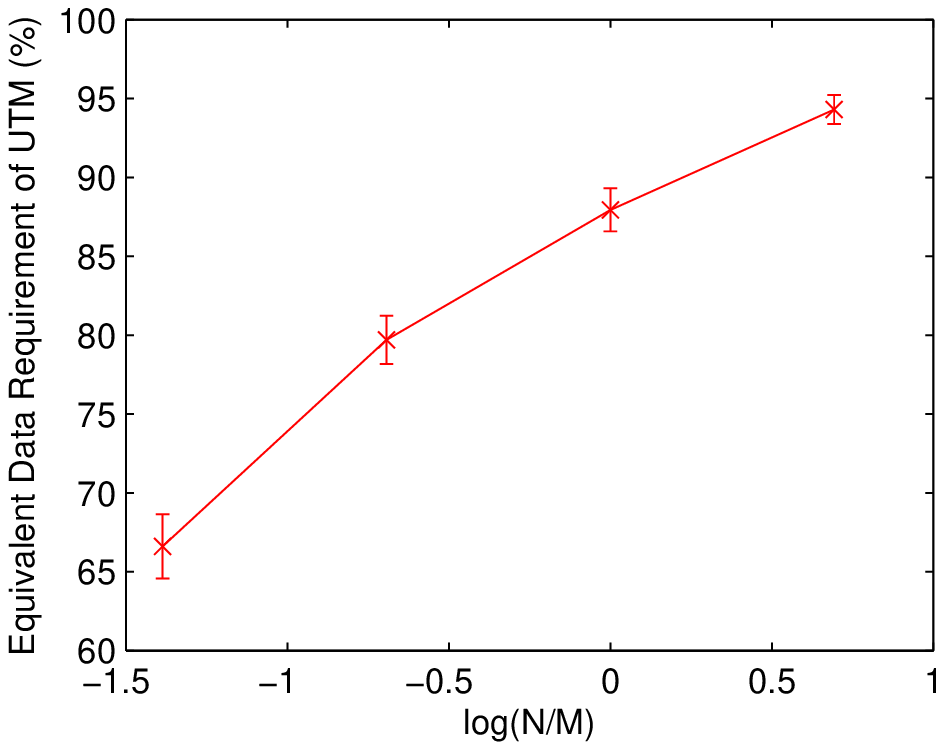}}
\caption{(a) The average out-of-sample log-likelihood delivered by URM and UTM, when residual variances are identical. (b) The average portion of data required by UTM to match the performance of URM. The error bars denote 95\% confidence interval.}
\label{fig:syn_unif}
\end{figure}

Our second type of synthetic data was generated using an entirely similar procedure except step 4 was replaced by
$$ \bfSigma_* = \bfF_*^{\frac{1}{2}} \bfF_*^{\frac{\rm T}{2}} + \mbox{diag}( e^{r_1}, e^{r_2}, \ldots, e^{r_M} ), $$
where $r_1, \ldots, r_M$ were sampled iid from ${\cal N}(0, \sigma_r^2)$. Note that $\sigma_r$ effectively controls the variation among residual variances. Since these residual variances are nonuniform, EM, MRH, TM, and STM were applied. Figure \ref{fig:syn_nonunif_llh} plots the results for the cases $\sigma_r = 0.5$ and $\sigma_r=0.8$, corresponding to moderate and large variation among residual variances, respectively. In either case, STM outperforms the alternatives. Figure $\ref{fig:syn_nonunif_edr}$ further gives the equivalent data requirement of STM with respect to each alternative.  It is worth pointing out that the performance of MRH and TM degrades significantly as the variation among residual variances grows, while EM is less susceptible to such change.  We will elaborate on this phenomenon in Section \ref{sec:analysis_nonunif}.

\begin{figure}[h]
\centering
  \subfloat[]{\includegraphics[scale=0.7]{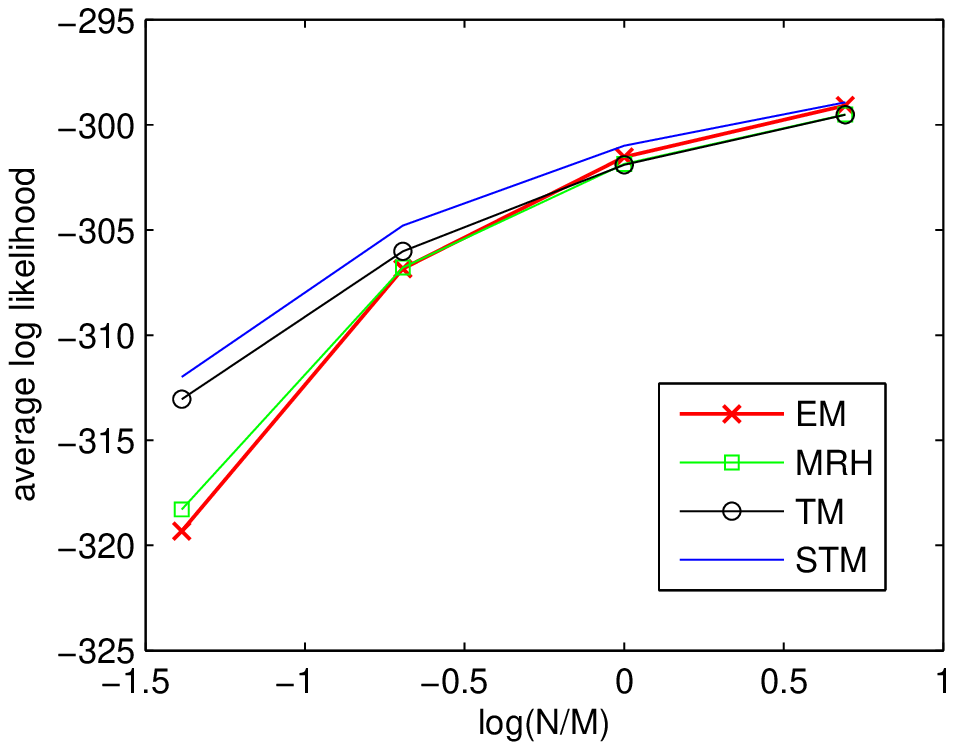}}
  \subfloat[]{\includegraphics[scale=0.7]{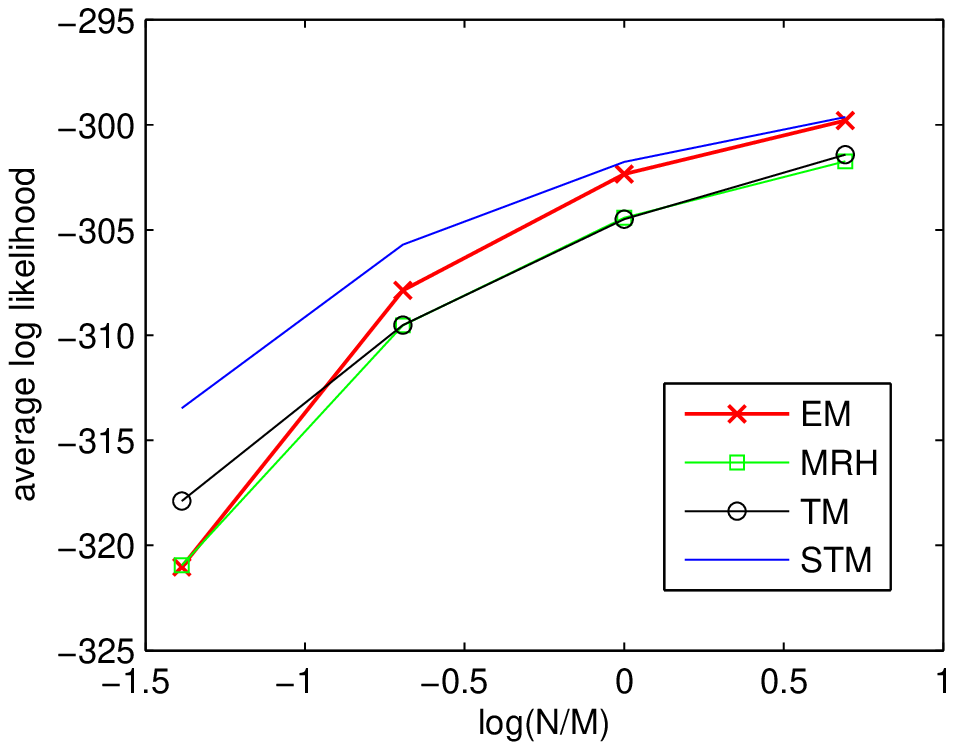}}
  \caption{The average out-of-sample log-likelihood delivered by EM, MRH, TM, and STM, when residuals have independent random variances  with (a) $\sigma_r=0.5$ and (b) $\sigma_r=0.8$.}
\label{fig:syn_nonunif_llh}
\end{figure}

\begin{figure}[h]
\centering
  \subfloat[]{\includegraphics[scale=0.7]{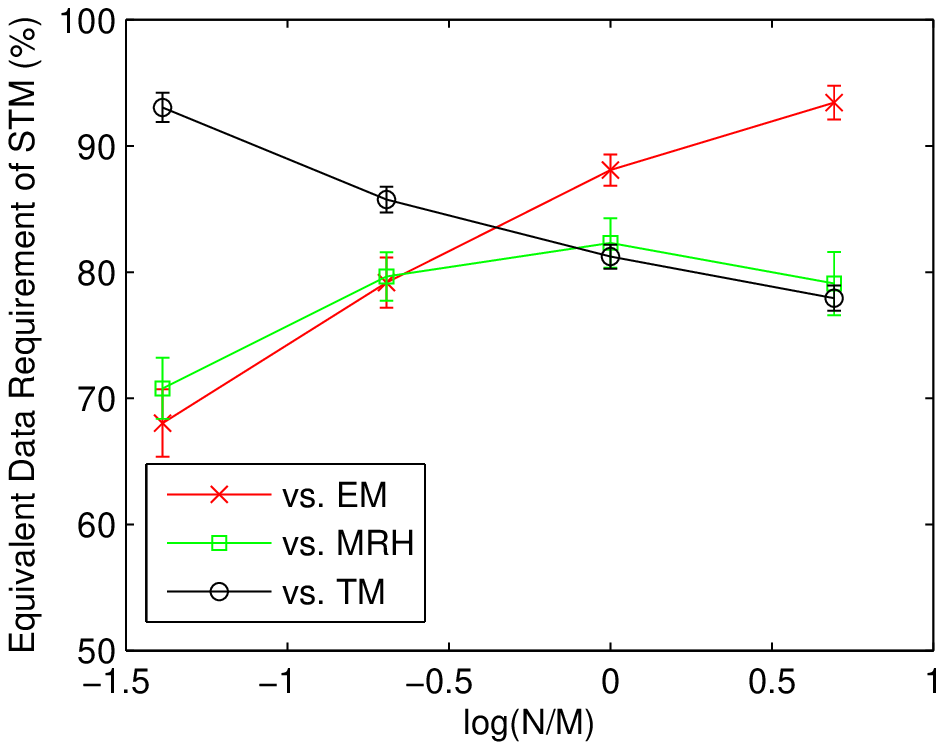}}
  \subfloat[]{\includegraphics[scale=0.7]{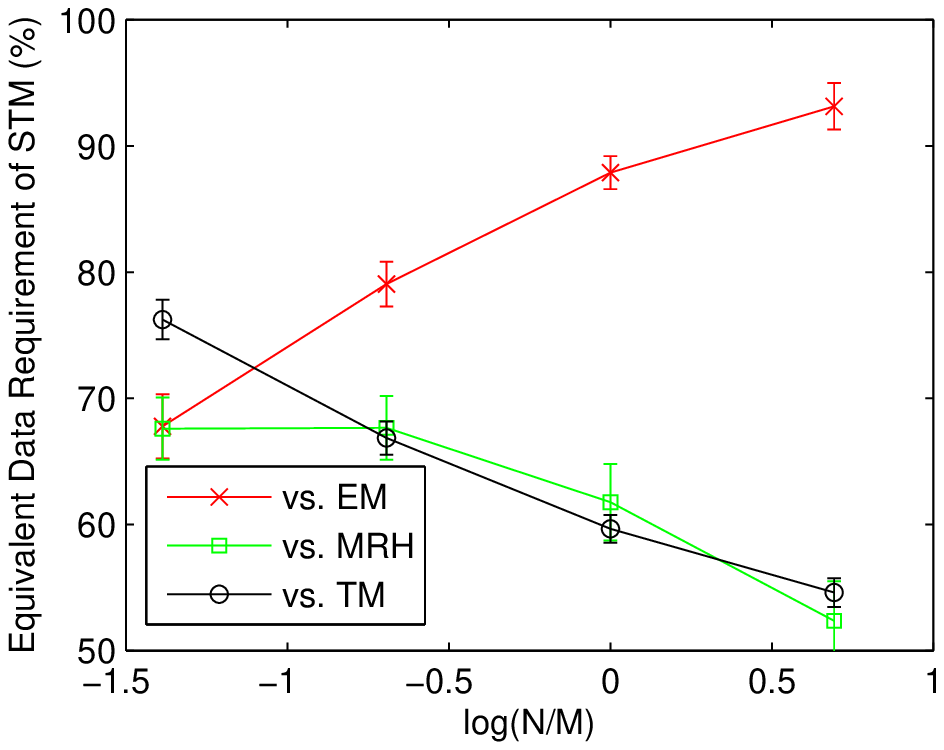}}
  \caption{The equivalent data requirement of STM with respect to EM, MRH, and TM when (a) $\sigma_r=0.5$ and (b) $\sigma_r=0.8$. The error bars denote 95\% confidence interval. }
\label{fig:syn_nonunif_edr}
\end{figure}

\subsection{S\&P 500 Data}

An important application area of factor analysis is finance, where return covariances are used to assess risks and guide diversification \citep{Markowitz52}.
The experiments we will now describe involve estimation of such covariances from historical daily returns of stocks represented in the S\&P 500 index as of
March, 2011.  We use price data collected from the period starting November 2, 2001, and ending August 9, 2007.  This period was chosen to avoid the erratic
market behavior observed during the bursting of the dot-com bubble in 2000 and the financial crisis that began in 2008.  Normalized daily log-returns were 
computed from closing prices through a process described in detail in the appendix.  Over this duration, there were 1400 trading days and 453 of the stocks
under consideration were active.  This produced a data set ${\cal Y} = \{ \bfy_{(1)}, \ldots, \bfy_{(1400)} \}$, 
in which the $i$th component of $\bfy_{(t)}\in \mathbb{R}^{453}$ represents the normalized log-daily-return of stock $i$ on day $t$.

We generated estimates corresponding to each among a subset of the 1400 days.  As would be done in real-time application, for each such day
$t$ we used $N$ data points $\{{\bf y}_{(t-N+1)},$ $\ldots,$ ${\bf y}_{(t)}\}$ that would have been available on that day to compute the estimate and subsequent data 
to assess performance.  In particular, we generated estimates every ten days
beginning on day $1200$ and ending on day $1290$.  For each of these days, we evaluated average log-likelihood of log-daily-returns over the next ten days. Algorithm $\ref{alg:testing}$ formalizes this procedure.
\begin{algorithm}
\caption{Testing Procedure ${\cal T}$\\ {\bf Input:} learning algorithm ${\cal U}$, regularization parameter $\theta$, window size $N$, time point $t$ \\ {\bfseries Output:} test-set log-likelihood }
\label{alg:testing}
\begin{algorithmic}
\STATE ${\cal X} \leftarrow  \{ {\bf y}_{(t-N+1)}, \ldots, {\bf y}_{(t)} \}$ (training set)
\STATE ${\cal X}' \leftarrow  \{ {\bf y}_{(t+1)}, \ldots, {\bf y}_{(t+10)} \}$ (test set)
\STATE $\hat{\Sigma} \leftarrow {\cal U}({\cal X}, \theta)$
\RETURN{ $\log p( {\cal X}' | \hat{\Sigma} )$ }
\end{algorithmic}
\end{algorithm}

These tests served the purpose of sliding-window cross-validation, as we tried a range of regularization parameters over this time period
and used test results to select a regularization parameter for each algorithm. More specifically, for each algorithm $\cal U$, its regularization parameter was selected by
$$ \hat{\theta}= \argmax_{\theta} \sum_{j=0}^{9} {\cal T}({\cal U}, \theta, N, 1200+10 j). $$
On days $1300, 1310, \ldots, 1390$, we generated one estimate per
day per algorithm, in each case using the regularization parameter selected earlier and evaluating average log-likelihood over the next ten days.
For each algorithm $\cal U$, we took the average of these ten ten-day averages to be its out-of-sample performance, defined as
$$ \frac{1}{100} \sum_{j=0}^{9} {\cal T}({\cal U}, \hat{\theta}, N, 1300+10 j). $$

Figure \ref{fig:real} plots the performance delivered EM, MRH, TM, and STM with $N\in \{ 200, 300, \ldots, 1200\}$.  STM is the dominant
solution.  It is natural to ask why the performance of each algorithm improves then degrades as 
$N$ grows.  If the time series were stationary, one would expect performance to monotonically improve with $N$.  However, this is a real time series 
and is not necessarily stationary.  We believe that the distribution changes enough over about a thousand trading days so that using historical
data collected further back worsens estimates.  This observation points out that in real applications STM is likely to generate superior
results even when all aforementioned algorithms are allowed to use all available data.  This is in contrast with the experiments of Section \ref{se:synthetic}
involving synthetic data, which may have led to an impression that the performance difference could be made small by using more data.

\begin{figure}[h]
\centering
\includegraphics[scale=0.75]{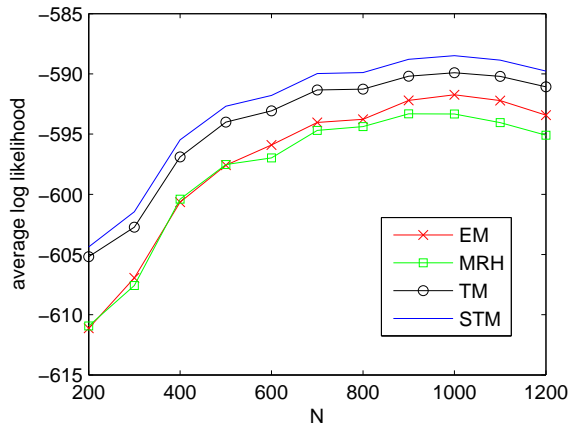}
\caption{The average log-likelihood of test set, delivered by EM, MRH, TM, and STM, over different training-window sizes $N$.}
\label{fig:real}
\end{figure}

\section{Analysis}
\label{sec:analysis}

In this section, we explain why UTM and STM are expected to outperform alternatives as we have seen in our experimental results.

\subsection{Uniform Residual Variances}
\label{sec:analysis_uniform}

Let us start with the simpler context in which residual variances are identical.  In other words, let $\bf\Sigma_* = \bfF_* + \sigma^2 \bfI$ for a low
rank matrix $ \bfF_*$ and uniform residual variance $\sigma^2$.  We will begin our analysis with two desirable properties of UTM, and then move on to the comparison between UTM and URM.

It is easy to see that $\bfSigma_{\rm UTM}^\lambda$ is a consistent estimator of $\bfSigma_*$ for any $\lambda > 0$, since $\lim_{N \tends \infty} \frac{2\lambda}{N} = 0 $ and by Theorem \ref{thm:UTM} we have 
$$ \lim_{ N \tends \infty} \bfSigma_{\rm UTM}^\lambda = \lim_{N \tends \infty} \bfSigma_{\rm SAM} \overset{a.s.}{\longrightarrow} \bfSigma_* .$$
Another important property of UTM is the fact that the trace of UTM estimate is the same as that of sample covariance matrix. This preservation is desirable as suggested by the following result.
\begin{proposition}
\label{prop:trace-preservation}
For any fixed $N, K$, scalars $\ell_1 \geq \ell_2 \geq \cdots \geq \ell_K \geq \sigma^2 > 0$, and any sequence of covariance matrices $\bfSigma_*^{(M)} \in \spsdM$ with eigenvalues $\ell_1,\ldots,\ell_K,\sigma^2,\ldots, \sigma^2$, we have
$$\frac{ {\rm tr} \bfSigma_{\rm SAM} }{ {\rm tr} \bfSigma_* } \overset{a.s.}{\longrightarrow} 1,$$
as $M \tends \infty$ and 
\begin{equation}
\label{eq:trace_bound}
{\rm Pr} \left( \left| \frac{ {\rm tr} \bfSigma_{\rm SAM} }{ {\rm tr} \bfSigma_* } - 1 \right| \geq \epsilon \right) \leq 2\exp\left( - N \epsilon^2 \Omega(M) \right).
 \end{equation}
\end{proposition}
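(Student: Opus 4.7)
The plan is to recast $\tr \bfSigma_{\rm SAM}$ as a weighted sum of independent $\chi^2_1$ variates and then apply a standard concentration inequality for such sums. Writing the spectral decomposition $\bfSigma_* = \bfU \bfD \bfU^\tp$ with $\bfD = \diag(\ell_1,\ldots,\ell_K,\sigma^2,\ldots,\sigma^2)$, and representing each sample as $\bfx_{(n)} = \bfU \bfD^{1/2} \bfy_{(n)}$ for iid $\bfy_{(n)} \sim \calN(0,\bfI)$, one obtains
$$\tr \bfSigma_{\rm SAM} \;=\; \frac{1}{N}\sum_{n=1}^N \|\bfx_{(n)}\|^2 \;=\; \frac{1}{N}\sum_{n=1}^N \sum_{i=1}^M d_i\, y_{(n),i}^2,$$
a positive linear combination of $NM$ independent $\chi^2_1$ variates whose mean is exactly $T_M := \tr \bfSigma_* = (M-K)\sigma^2 + \sum_{k=1}^K \ell_k$.

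Next I would invoke the Laurent--Massart inequality: for any nonnegative weights $\bfa$ and iid standard normal $Z_i$,
$$\Pr\!\left( \left|\sum_i a_i(Z_i^2-1)\right| \;\geq\; 2\|\bfa\|_2\sqrt{x} + 2\|\bfa\|_\infty x \right) \;\leq\; 2e^{-x}.$$
In our setting the coefficient vector consists of $N$ copies of each $d_i$, so $\|\bfa\|_2^2 = N \sum_i d_i^2 = N\bigl((M-K)\sigma^4 + \sum_k \ell_k^2\bigr) = \Theta(NM)$ while $\|\bfa\|_\infty = \ell_1$ is a constant independent of $M$; meanwhile $T_M = \Theta(M)$. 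Setting $2\|\bfa\|_2\sqrt{x} + 2\|\bfa\|_\infty x = \epsilon N T_M$, the sub-Gaussian term dominates for any fixed $\epsilon$ once $M$ is large, and solving yields $x = \Omega\!\bigl(N\epsilon^2 T_M^2/\sum_i d_i^2\bigr) = \Omega(N\epsilon^2 M)$, which delivers exactly (\ref{eq:trace_bound}).

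Almost-sure convergence of $\tr \bfSigma_{\rm SAM}/T_M$ to $1$ then follows from this tail bound by Borel--Cantelli: for any fixed $\epsilon > 0$, $\sum_{M} 2\exp(-cN\epsilon^2 M) < \infty$, so with probability one only finitely many of the events $\{|\tr \bfSigma_{\rm SAM}^{(M)}/T_M^{(M)} - 1| \geq \epsilon\}$ occur. I do not anticipate a genuinely hard step; the care required is in verifying the $\Theta(M)$ scaling of both $T_M$ and $\sum_i d_i^2$ (which uses the hypothesis that the bulk value $\sigma^2$ repeats $M-K$ times so that the $K$ outlier eigenvalues contribute only $O(1)$ to the ratio $T_M^2/\sum_i d_i^2$), and in confirming that the sub-Gaussian branch of the Laurent--Massart bound is the binding one for fixed $\epsilon$ as $M$ grows.
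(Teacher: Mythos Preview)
Your proposal is correct and essentially matches the paper's own proof: both reduce $\tr\bfSigma_{\rm SAM}$ to a weighted chi-square sum via the eigendecomposition of $\bfSigma_*$, apply the Laurent--Massart inequality, and verify the $\Theta(M)$ scaling of $T_M^2/\sum_i d_i^2$ to obtain the exponent $N\epsilon^2\,\Omega(M)$. The only cosmetic differences are that the paper derives the almost-sure statement directly from the strong law of large numbers (rather than Borel--Cantelli from the tail bound) and invokes a simplified sub-Gaussian form of the concentration inequality without separately tracking the $\|\bfa\|_\infty$ term.
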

Note that, for any fixed fixed number of samples $N$, the right-hand-side of (\ref{eq:trace_bound}) diminishes towards 0 as data dimension $M$ grows.  In other words, as long as the data dimension is large compared to the number of factors $K$, the sample trace is usually a good estimate of the true one, even when we have very limited data samples.

Now we would like to understand why UTM outperforms URM.  Recall that,
given an eigendecomposition $\bfSigma_{\rm SAM} = \bfB \bfS \bfB^{\rm T}$ of the sample covariance matrix, estimates generated
by URM and UTM admit eigendecompositions $\bfSigma_{\rm URM} = \bfB \bfH_{\rm URM} \bfB^{\rm T}$ and $\bfSigma_{\rm UTM} = \bfB \bfH_{\rm UTM} \bfB^{\rm T}$,
deviating from the sample eigenvalues $\bfS$ but not the eigenvectors $\bfB$.  Hence, URM and UTM differ only in the way they select
eigenvalues: URM takes each eigenvalue to be either a constant or the corresponding 
sample eigenvalue, while  UTM takes each eigenvalue to be either a constant or the corresponding sample eigenvalue less another constant. Thus, large eigenvalues produced by UTM are a constant offset less than those produced by URM, as illustrated in Figure \ref{fig:eigs}.  We now explain why such subtraction lends UTM an advantage over URM in high-dimensional cases.

Given the eigenvectors $\bfB = [\bfb_1 \cdots \bfb_M]$, let us consider the optimal eigenvalues that maximize out-of-sample log-likelihood of the estimate. Specifically, let us define
$$ \bfH^* \triangleq \argmax_{\bfH \in \bfD_+^M} L(\bfB \bfH \bfB^{\rm T}, \bfSigma_*) .$$
With some straightforward algebra, we can show that $\bfH^* = {\rm diag}(h_1^*,\ldots, h_M^*)$, where $h_i^* = \bfb_i^{\rm T} \bfSigma_* \bfb_i,$ for $i=1,\ldots, M$.
Let each $i$th sample eigenvalue be denoted by $s_i = \bfS_{i,i}$, and let the $i$th largest eigenvalue of $\bf\Sigma_*$ be denoted by $\ell_i$.
The following theorem, whose proof relies on two results from random matrix theory found in \citet{Baik06} and \citet{Paul07}, relates sample eigenvalues $s_i$ to optimal eigenvalues $h_i^*$.
\begin{theorem}
\label{th:constant-correction}
For all $K$, scalars $\ell_1 > \ell_2 > \cdots > \ell_K > \sigma^2 > 0$, $\rho \in (0,1)$, sequences $N_{(M)}$ such that $|M/N_{(M)} - \rho| = o(1/\sqrt{N_{(M)}})$, covariance matrices $\bfSigma_*^{(M)} \in \spsdM$ with eigenvalues $\ell_1,\ldots,\ell_K,\sigma^2,\ldots, \sigma^2$, and $i$ such that
$\ell_i > (1+\sqrt{\rho}) \sigma^2$, there exists $\epsilon_i \in (0,2\sigma^2/(\ell_i-\sigma^2))$ such that  
$h_i^* \overset{p}{\longrightarrow} s_i - (2+\epsilon_i) \rho \sigma^2$ as $M \tends \infty$.
\end{theorem}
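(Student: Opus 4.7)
The plan is to compute the probabilistic limits of $s_i$ and $h_i^*$ separately and then perform an explicit algebraic reduction to identify $\epsilon_i$. Let $\bfu_1,\ldots,\bfu_M$ denote an orthonormal eigenbasis of $\bfSigma_*$ with $\bfu_1,\ldots,\bfu_K$ corresponding to the spikes $\ell_1,\ldots,\ell_K$. Writing $\bfSigma_* = \sigma^2 \bfI + \sum_{j=1}^K (\ell_j - \sigma^2)\bfu_j\bfu_j^{\rm T}$ immediately yields
\[
h_i^* \;=\; \bfb_i^{\rm T} \bfSigma_* \bfb_i \;=\; \sigma^2 + \sum_{j=1}^K (\ell_j-\sigma^2)\,|\bfb_i^{\rm T}\bfu_j|^2,
\]
so the task reduces to understanding the limiting behavior of the inner products $|\bfb_i^{\rm T}\bfu_j|^2$ together with that of $s_i$.

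First I would invoke the BBP-type eigenvalue result of \citet{Baik06}: for any index $i\le K$ with $\ell_i > \sigma^2(1+\sqrt{\rho})$,
\[
s_i \;\overset{p}{\longrightarrow}\; \ell_i + \frac{\rho\sigma^2\ell_i}{\ell_i-\sigma^2}.
\]
Then the eigenvector result of \citet{Paul07} gives $|\bfb_i^{\rm T}\bfu_i|^2 \overset{p}{\longrightarrow} c_i^2 := \tfrac{1-\rho\sigma^4/(\ell_i-\sigma^2)^2}{1+\rho\sigma^2/(\ell_i-\sigma^2)}$, while the strict separation $\ell_1>\cdots>\ell_K$ implies $|\bfb_i^{\rm T}\bfu_j|^2 \overset{p}{\longrightarrow} 0$ for each $j\ne i$ with $j\le K$. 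Substitution yields $h_i^* \overset{p}{\longrightarrow} \sigma^2 + (\ell_i-\sigma^2)c_i^2$.

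The remaining step is algebra. A short simplification gives $(\ell_i-\sigma^2)(1-c_i^2) = \rho\sigma^2\ell_i/(\ell_i-\sigma^2+\rho\sigma^2)$, so combining with the BBP limit produces
\[
s_i - h_i^* \;\overset{p}{\longrightarrow}\; \rho\sigma^2\,\ell_i\!\left[\frac{1}{\ell_i-\sigma^2}+\frac{1}{\ell_i-\sigma^2+\rho\sigma^2}\right].
\]
Matching this with $(2+\epsilon_i)\rho\sigma^2$ and using $\ell_i=(\ell_i-\sigma^2)+\sigma^2$ yields the explicit value $\epsilon_i = \tfrac{\sigma^2[(2-\rho)(\ell_i-\sigma^2)+\rho\sigma^2]}{(\ell_i-\sigma^2)(\ell_i-\sigma^2+\rho\sigma^2)}$. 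Positivity is immediate since $\rho<1$, and the upper bound $\epsilon_i<2\sigma^2/(\ell_i-\sigma^2)$ reduces after cross-multiplication to $\rho(\ell_i-\sigma^2) > -\rho\sigma^2$, which is trivial.

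The only nontrivial point is procedural rather than technical: \citet{Baik06} and \citet{Paul07} are typically stated for a fixed ratio $M/N=\rho$, whereas the theorem allows $|M/N_{(M)} - \rho| = o(1/\sqrt{N_{(M)}})$. This gap is tighter than the $O(1/\sqrt{N})$ fluctuation scale of sample eigenvalues and eigenvector overlaps in the spiked model, so a perturbation argument against a reference sequence with exact ratio $\rho$ preserves the in-probability conclusion. Beyond this verification, the rest is substitution and elementary algebra.
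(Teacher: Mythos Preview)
Your proposal is correct and follows essentially the same route as the paper: both invoke \citet{Baik06} for the limit of $s_i$ and \citet{Paul07} for the limiting squared overlaps $|\bfb_i^{\rm T}\bfu_j|^2$, and both arrive at the identical value $\epsilon_i = \sigma^2/(\ell_i-\sigma^2) + (1-\rho)\sigma^2/(\ell_i-\sigma^2+\rho\sigma^2)$, which is exactly your expression rewritten. Your decomposition $h_i^* = \sigma^2 + \sum_j(\ell_j-\sigma^2)|\bfb_i^{\rm T}\bfu_j|^2$ is algebraically equivalent to the paper's $\sum_{j\le K}\ell_j(\bfb_i^{\rm T}\bfa_j)^2 + \sigma^2\sum_{j>K}(\bfb_i^{\rm T}\bfa_j)^2$, and your remark about the rate condition $|M/N_{(M)}-\rho|=o(1/\sqrt{N_{(M)}})$ is in fact the hypothesis under which Paul's eigenvector results are stated, so no additional perturbation argument is needed.
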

Consider eigenvalues $\ell_i$ that are large relative to $\sigma^2$ so that $\epsilon_i$ is negligible.  In such cases, when in
the asymptotic regime identified by Theorem \ref{th:constant-correction}, we have $h_i^* \approx s_i - 2 \rho \sigma^2$.  
This observation suggests that, when the number of factors $K$ is relatively small compared to data dimension $M$, and when $M$ and number of samples $N$ scale proportionally to large numbers, the way in which UTM subtracts a constant from large sample eigenvalues should improve performance relative to URM, which does not modify large sample eigenvalues.
Furthermore, comparing Theorem \ref{thm:UTM} and \ref{th:constant-correction}, we can see that the correction term should satisfy $\frac{2\lambda}{N} \simeq 2 \rho \sigma^2$, or equivalently $\lambda \simeq M\sigma^2$.  This relation can help us narrow the search range of $\lambda$ in cross-validation.

It is worth pointing out that the over-shooting effect of sample eigenvalues is well known in statistics literature (see, e.g, \citet{Johnstone01} ).  Our contribution, however, is to quantify this effect for factor models, and show that the large eigenvalues are not only biased high, but biased high by the same amount.

\subsection{Nonuniform Residual Variances}
\label{sec:analysis_nonunif}

Comparing Figure $\ref{fig:syn_unif}$, $\ref{fig:syn_nonunif_llh}$, and $\ref{fig:syn_nonunif_edr}$, we can see that the relation between URM and UTM is analogous to that between  EM and STM. Specifically, the equivalent data requirement of UTM versus URM behaves very similarly as that of STM versus EM. This should not be surprising, as we now explain.

To develop an intuitive understanding of this phenomenon, let us consider an idealized, analytically tractable context in which both EM and STM successfully estimate the relative magnitudes of residual variances. In particular, suppose we impose an additional constraint $\bfR \propto \bfR_*$ into ($\ref{eq:reg_rank}$) and an additional constraint $\bfT \propto \bfR_*^{-\frac{1}{2}}$ into ($\ref{eq:reg_STM}$). \footnote{Here we use the notation $\bfA \propto \bfB$ to mean that there exists $\gamma \geq 0$ such that $\bfA = \gamma \bfB$.} In this case, it is straightforward to show that EM is equivalent to URM with data scaled by $\bfR_*^{-\frac{1}{2}}$, and STM is equivalent to UTM with the same scaled data.
Therefore, by the argument given in Section \ref{sec:analysis_uniform}, it is natural to expect STM outperforms EM.

A question that remains, however, is why MRH and TM are not as effective as STM.  We believe the reason to be that they tend to select factor loadings that assign larger values than appropriate to variables with large residual variances.  Indeed, such disadvantage has been observed in our synthetic data experiment: when the variation among residual variances increases, the performances of MRH and TM degrade significantly, as shown in Figure \ref{fig:syn_nonunif_llh}.

Again, let us illustrate this phenomenon through an idealized context.  Specifically, consider a case in which the sample covariance matrix $\bfSigma_{\rm SAM}$ turns out to be identical to $\bfSigma_* = \bfF_* + \bfR_*$, with $\bfR_* = \diag( r,1,1,\ldots, 1)$ and $\bfF_* = {\bf 1}{\bf 1}^\tp$, where $\bf 1$ is a vector with every component equal to $1$.  Recall that MRH uses the eigenvectors of $\bfSigma_{\rm SAM}$ corresponding to the largest eigenvalues as factor loading vectors.  One would hope that factor loading estimates are insensitive to underlying residual variances.  However, the following proposition suggests that, as $r$ grows, the first component of the first eigenvector of $\bfSigma_{\rm SAM}$ dominates other components by an unbounded ratio.
\begin{proposition} \label{prop:MRH-fail}
Suppose $\bfR_* = \diag( r,1,1,\ldots, 1)$ and $\bfF_* = {\bf 1}{\bf 1}^\tp$, $r > 1$. Let $\bff = [f_1 \quad \ldots \quad f_M]^{\rm T}$ be the top eigenvector of $\bfSigma_*$. Then we have $ f_1 / f_i = \Omega(r), \forall i>1$.
\end{proposition}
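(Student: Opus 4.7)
The plan is to exploit the strong symmetry of $\bfSigma_* = \mathbf{1}\mathbf{1}^\tp + \diag(r,1,\ldots,1)$: all coordinates $i>1$ play interchangeable roles, so the top eigenvector $\bff$ must satisfy $f_2=f_3=\cdots=f_M$. Given this, I can reduce the eigenvalue problem to a $2\times 2$ system, solve for the top eigenvalue explicitly, and then read off the ratio $f_1/f_i$.

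First I would write out the equation $\bfSigma_* \bff = \mu \bff$ componentwise. Letting $s=\mathbf{1}^\tp \bff$, the first row gives $s + r f_1 = \mu f_1$, while row $i>1$ gives $s + f_i = \mu f_i$. Solving,
\begin{equation*}
f_1 = \frac{s}{\mu - r}, \qquad f_i = \frac{s}{\mu - 1} \text{ for } i>1,
\end{equation*}
so immediately $f_1/f_i = (\mu - 1)/(\mu - r)$. Substituting back into the constraint $s = f_1 + (M-1) f_i$ and clearing denominators yields the characteristic equation
\begin{equation*}
\mu^2 - (M + r + 1)\mu + (Mr + 1) = 0,
\end{equation*}
whose larger root $\mu$ is the top eigenvalue.

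Next I would bound $\mu - r$ and $\mu - 1$ as $r\to\infty$ (treating $M$ as fixed). Using the quadratic formula,
\begin{equation*}
\mu - r = \tfrac{1}{2}\Bigl( (M+1-r) + \sqrt{(r-M)^2 + 2(M+r) - 3} \Bigr).
\end{equation*}
Since the square root exceeds $r-M$, we get $\mu - r > 1/2 > 0$. Applying the elementary bound $\sqrt{a^2+b} \le a + b/(2a)$ with $a=r-M$ shows $\mu - r = 1/2 + O(1)$ and hence is bounded above by a constant as $r$ grows. In contrast, $\mu - 1 = (\mu - r) + (r - 1) = \Theta(r)$. Combining with the ratio formula gives $f_1/f_i = \Theta(r)/O(1) = \Omega(r)$ for every $i > 1$.

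The main ``obstacle'' is really just the bookkeeping on the quadratic: the key fact to verify carefully is that the larger root $\mu$ stays within $O(1)$ of $r$ rather than growing faster, since otherwise the ratio could degenerate. Once the symmetry reduction collapses the problem to two degrees of freedom, everything else is a one-line estimate on the discriminant.
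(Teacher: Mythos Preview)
Your proof is correct and follows essentially the same route as the paper: reduce by symmetry to a two-parameter eigenvector $[x,y,\ldots,y]^\tp$, derive the same quadratic $\mu^2-(M+r+1)\mu+(Mr+1)=0$ for the top eigenvalue, and read off the ratio. The only cosmetic difference is that the paper extracts the ratio from the second eigen-equation as $f_1/f_i=\mu-M$ and then observes $\mu\ge(M+r+1)/2=\Omega(r)$ directly, whereas you write $f_1/f_i=(\mu-1)/(\mu-r)$ and bound numerator and denominator separately; the two expressions are equal on the characteristic curve (since $\mu^2-(M+r)\mu+Mr=\mu-1$), so the paper's form saves you the discriminant estimate.
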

As such, the factor estimated by this top eigenvector can be grossly misrepresented, implying MRH is not preferable when residual variances differ significantly from each other.

TM suffers from a similar problem, though possibly to a lesser degree.  The matrix $\bfV$ in the TM formulation (\ref{eq:TM}) represents an estimate of $\bfR_*^{-1}$.
For simplicity, let us consider an idealized TM formulation which further incorporates a constraint $\bfV = \bfR_*^{-1}$. That is,
\begin{eqnarray}
\max_{ \bfV \in \mathbb{D}_+^M, \bfG \in \spsdM}  &&  \log p({\cal X} | \bfSigma )  - \lambda {\rm tr}( {\bfG} ) \label{eq:modified_TM} \\
{\rm s.t.} && \bfSigma^{-1} =  \bfV - \bfG \nonumber \\
&& \bfSigma_{\rm SAM} = \bfSigma_*  \nonumber \\
&& \bfV = \bfR_*^{-1}. \nonumber
\nonumber
\end{eqnarray}
	Using the same setting as in Proposition \ref{prop:MRH-fail}, we can show that when this idealized TM algorithm produces an estimate of exactly one factor as desired, the first component of the estimated factor loading vector is strictly larger than the other components, as formally described in the following proposition.
\begin{proposition} \label{prop:TM-fail}
Suppose $\bfR_*$ and $\bfF_*$ are given as in Proposition \ref{prop:MRH-fail}, and let the estimate resulting from ($\ref{eq:modified_TM}$) be $\hat{\bfSigma} = \bfR_* + \hat{\bfF}$. Then for all $\lambda>0$ we have
\begin{enumerate}
\item $\rank(\hat{\bfF})=1$ if and only if $\lambda < MN/2$.
\item In that case, if we rewrite $\hat{\bfF}$ as $\bff \bff^{\rm T}$, where $\bff=[f_1 \quad \ldots \quad f_M]^{\rm T}$, then $\forall i >1$, $f_1/f_i$ is greater than 1 and monotonically increasing with $r$. Furthermore, if $\lambda > (M-1)N/2$, then $f_1/f_i = \Omega(r)$.
\end{enumerate}
\end{proposition}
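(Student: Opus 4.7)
The plan is to reduce both statements to the eigenanalysis of a single $2\times2$ matrix.  Substituting $\bfV=\bfR_*^{-1}$ and $\bfSigma_{\rm SAM}=\bfSigma_*$ into (\ref{eq:modified_TM}), writing $\bfSigma=\bfR_*+\bfF$ with $\bfF\succeq 0$, letting $\alpha=2\lambda/N$, and using $\bfG=\bfR_*^{-1}-\bfSigma^{-1}$, the problem becomes (up to constants) the minimization of $\log\det\bfSigma+\tr\bigl(\bfSigma^{-1}(\bfSigma_*-\alpha\bfI)\bigr)$.  The whitening $\bfM=\bfR_*^{1/2}\bfSigma^{-1}\bfR_*^{1/2}$ rewrites this as
\[
\min_{0\prec\bfM\preceq\bfI}\;-\log\det\bfM+\tr(\bfM\bfP),\qquad \bfP\triangleq\bfR_*^{-1/2}\bfSigma_*\bfR_*^{-1/2}-\alpha\bfR_*^{-1},
\]
with the upper bound encoding $\bfG\succeq 0$.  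The objective is strictly convex and invariant under $\bfM\mapsto\bfQ^{\tp}\bfM\bfQ$ for every orthogonal $\bfQ$ commuting with $\bfP$, so uniqueness of the minimizer $\bfM^{\star}$ forces it to commute with $\bfP$; in $\bfP$'s eigenbasis the problem decouples, and along each eigendirection with eigenvalue $p_i$ one gets $m_i^{\star}=1/p_i$ if $p_i>1$ and $m_i^{\star}=1$ otherwise.  Since $\hat\bfF=\bfR_*^{1/2}\bigl((\bfM^{\star})^{-1}-\bfI\bigr)\bfR_*^{1/2}$, this yields $\rank\hat\bfF=\#\{i:p_i>1\}$.

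For the specific $\bfR_*$ and $\bfF_*$, writing $\bfe=\bfR_*^{-1/2}\mathbf{1}$ and $\tilde{\bfe}=\mathbf{1}-\bfe_1$ gives $\bfP=(1-\alpha)\bfI+\alpha\tfrac{r-1}{r}\bfe_1\bfe_1^{\tp}+\bfe\bfe^{\tp}$, a rank-two perturbation of $(1-\alpha)\bfI$ whose range is $\mathrm{span}\{\bfe_1,\bfe\}=\mathrm{span}\{\bfe_1,\tilde{\bfe}\}$.  The $M-2$ eigenvalues of $\bfP$ orthogonal to this span all equal $1-\alpha<1$ and contribute nothing to the rank.  In the orthonormal basis $\{\bfe_1,\tilde{\bfe}/\sqrt{M-1}\}$ of the span, a direct computation gives
\[
(\bfP-\bfI)\big|_{\rm 2D}=\begin{pmatrix}(1-\alpha)/r & \sqrt{(M-1)/r}\\ \sqrt{(M-1)/r} & M-1-\alpha\end{pmatrix},
\]
with determinant $\alpha(\alpha-M)/r$ and trace $(1-\alpha)/r+M-1-\alpha$.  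For $\alpha\in(0,M)$ the determinant is negative, so the block has exactly one positive eigenvalue; for $\alpha\geq M$ the determinant is nonnegative and the trace is negative, yielding no positive eigenvalue.  Therefore $\rank\hat\bfF=1$ iff $\alpha<M$, i.e., $\lambda<MN/2$, which is Part 1.

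Assume $\alpha<M$ and let $\bfv$ be the unit eigenvector of the unique $p>1$.  A short calculation yields $\hat\bfF=(p-1)(\bfR_*^{1/2}\bfv)(\bfR_*^{1/2}\bfv)^{\tp}$, so $\bff\propto\bfR_*^{1/2}\bfv$; because $\bfv$ lies in $\mathrm{span}\{\bfe_1,\tilde{\bfe}\}$, the vector $\bff$ automatically has the form $[a,b,\ldots,b]^{\tp}$.  Writing the $2\times2$ eigenvector as $(C,p-A)$ with $A=1+(1-\alpha)/r$, $C=\sqrt{(M-1)/r}$, $D=M-\alpha$, one finds $s\triangleq a/b=(M-1)/(p-A)$; inserting $p-A=(D-A)/2+\sqrt{((D-A)/2)^{2}+C^{2}}$ and simplifying yields
\[
s^{2}+Bs-r(M-1)=0,\qquad B=-1+r(M-1)-\alpha(r-1).
\]
Denote the positive root by $s_+$.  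The product of the two roots is $-r(M-1)<0$, and $F(1)=-\alpha(r-1)<0$ whenever $r>1$, so $s_+>1$ for $r>1$.

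For monotonicity, implicit differentiation gives $(2s_++B)\,s_+'(r)=(M-1)(1-s_+)+\alpha s_+$, and $2s_++B=\sqrt{B^{2}+4r(M-1)}>0$, so $\mathrm{sgn}\,s_+'(r)=\mathrm{sgn}\bigl[(M-1)-s_+(M-1-\alpha)\bigr]$.  When $\alpha\geq M-1$ this is manifestly positive; when $\alpha<M-1$ it is positive iff $s_+<s^{\star}\triangleq(M-1)/(M-1-\alpha)$, which I would establish by evaluating the quadratic at $s^{\star}$: a direct expansion gives the identity $F(s^{\star})=\alpha(M-\alpha)(M-1)/(M-1-\alpha)^{2}$, which is strictly positive for $\alpha<M$, and combined with the roots having negative product this forces $s_+<s^{\star}$.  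Hence $s_+'(r)>0$ always, giving monotonicity.  When $\alpha>M-1$ one has $B=(\alpha-1)-r(\alpha-M+1)\to-\infty$ as $r\to\infty$, so asymptotically $s_+\sim-B\sim r(\alpha-M+1)=\Omega(r)$.  The main obstacle is the monotonicity step in the regime $\alpha<M-1$: neither factor of $s_+'(r)$ is evidently signed along the solution curve, and the algebraic identity for $F(s^{\star})$ is what rules out $s_+$ crossing $s^{\star}$.
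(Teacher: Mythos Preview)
Your argument is correct and follows the same overall strategy as the paper: reduce the constrained problem to the eigenanalysis of the whitened matrix $\bfP=\bfR_*^{-1/2}(\bfSigma_*-\alpha\bfI)\bfR_*^{-1/2}$ (the paper's $\bfC$), observe that it is a rank-two perturbation of $(1-\alpha)\bfI$, and study the resulting $2\times2$ block. The execution differs in two places worth noting. First, you replace the paper's KKT-based Lemma~\ref{lem:GV} by a symmetry argument (uniqueness plus invariance under orthogonal $\bfQ$ commuting with $\bfP$), which is a legitimate and slightly more conceptual way to see that the optimizer diagonalizes with $\bfP$. Second, and more substantively, for Part~2 you parametrize directly by the ratio $s=f_1/f_i$ and obtain the quadratic $s^2+Bs-r(M-1)=0$, whereas the paper parametrizes by the eigenvalue $q$ of the rank-two part and obtains $f_1/f_i=r(q_++1-M)$ from a quadratic in $q$; these are equivalent via $r(p-D)=(M-1)/(p-A)$. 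Your monotonicity proof is then cleaner: implicit differentiation of the $s$-quadratic together with the single algebraic identity $F(s^\star)=\alpha(M-\alpha)(M-1)/(M-1-\alpha)^2>0$ handles the case $\alpha<M-1$ in one stroke, whereas the paper differentiates $r(q_++1-M)$ explicitly, carrying $\sqrt{\Delta}$ through the computation and splitting into three sub-cases on $\lambda'$. Both routes arrive at the same conclusions; yours trades some KKT bookkeeping and radical manipulation for a bit more algebraic setup in deriving the $s$-quadratic.
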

Again, this represents a bias that overemphasizes the variable with large residual variance, even when we incorporate additional information into the formulation. On the contrary, it is easy to see that STM can accurately recover all major factors if similar favorable constraints are incorporated into its formulation (ie., if we set $\bfSigma_{\rm SAM}=\bfSigma_*$ and $\bfT \propto \bfR_*^{-\frac{1}{2}}$ in ($\ref{eq:reg_STM}$) ).

\section{Conclusion}

We proposed factor model estimates UTM and STM, both of which are regularized versions of those that would be produced via PCA.  UTM deals with contexts where residual variances are assumed to be uniform, whereas STM handles variation among residual variances.
Our algorithm for computing the UTM estimate is as efficient as conventional PCA.  For STM, we provide an iterative algorithm with guaranteed convergence.
Computational experiments involving both synthetic and real data demonstrate that the estimates produced by our approach are significantly more accurate than those produced by pre-existing methods.  Further, we provide a theoretical analysis that elucidates the way in which UTM and STM corrects biases induced by alternative approaches.

Let us close by mentioning a few possible directions for further research. 
Our analysis has relied on data being generated by a Gaussian distribution.  It would be useful to understand how things change
if this assumption is relaxed.  Further, in practice estimates are often used to guide subsequent decisions.  It would be interesting
to study the impact of STM on decision quality and whether there are other approaches that fare better in this dimension.  Our recent paper on directed principle component analysis \citep{Kao2012} relates to this.  In some applications, PCA is
used to identify a subspace for dimension reduction.  It would be interesting to understand if and when the subspace identified by STM is more suitable.
Finally, there is a growing body of research on robust variations of factor analysis and PCA. These include the pursuit of sparse factor loadings \citep{Jolliffe03, Zou04, DAspremont04, Johnstone07, Amini08}, and the methods that are resistant to corrupted data \citep{Pison03, Candes09, Xu10}.  It would be interesting to explore connections to this body of work.

\appendix

\section{Proofs}

We first prove a main lemma that will be used in the proof of Theorem \ref{thm:UTM} and Proposition \ref{prop:TM-fail}.

\begin{lemma} \label{lem:GV}
Fixing $\bfV \in \mathbb{D}_{++}^{M}$, consider the optimization problem
\begin{eqnarray}
\max_{\bfG \in \spsdM }  &&  \log p({\cal X} | \bfSigma )  - \lambda {\rm tr}( {\bfG} ) \label{eq:reg_cvx_penalty_V} \\
{\rm s.t.} && \bfSigma^{-1} = \bfV - \bfG. \nonumber
\end{eqnarray}
Let $\bfG_\bfV$ be the solution to (\ref{eq:reg_cvx_penalty_V}), $\lambda'=2\lambda/N$, 
and $\bfU \bfD \bfU^{\rm T}$ be an eigendecomposition of matrix $ \bfV^{\frac{1}{2}} \left( \bfSigma_{\rm SAM} - \lambda' \bfI \right) \bfV^{\frac{1}{2}}$
with $\bfU$ orthonormal.
Then we have $ (\bfV - \bfG_\bfV)^{-1} = \bfV^{-\frac{1}{2}} \bfU \bfL \bfU^{\rm T} \bfV^{-\frac{1}{2}} $,
where $\bfL$ is a diagonal matrix with entries $ \bfL_{i,i} = \max \left\{ \bfD_{i,i}, 1 \right\}, \forall i=1,\ldots, M$.

\end{lemma}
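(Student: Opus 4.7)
The plan is to reparametrize by setting $\tilde\bfG = \bfV^{-\frac{1}{2}}\bfG\bfV^{-\frac{1}{2}}$, which is a bijection between $\{\bfG \in \spsdM : \bfV - \bfG \succ 0\}$ and $\{\tilde\bfG \in \spsdM : \bfI - \tilde\bfG \succ 0\}$. Substituting $\bfG = \bfV^{\frac{1}{2}}\tilde\bfG\bfV^{\frac{1}{2}}$ into the log-likelihood expression (\ref{eq:logpX}), multiplying through by $2/N > 0$, and dropping additive constants that do not depend on $\tilde\bfG$, the objective reduces to
\begin{equation*}
\log\det(\bfI - \tilde\bfG) + \tr\!\bigl(\tilde\bfG\,\bfV^{\frac{1}{2}}(\bfSigma_{\rm SAM} - \lambda'\bfI)\bfV^{\frac{1}{2}}\bigr) \;=\; \log\det(\bfI - \tilde\bfG) + \tr(\tilde\bfG\,\bfU\bfD\bfU^{\tp}),
\end{equation*}
a concave program on $\tilde\bfG \succeq 0$ with spectrum strictly below one.

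Next I would show that the maximizer shares eigenvectors with $\bfU\bfD\bfU^{\tp}$. Let $\bfC = \bfU^{\tp}\tilde\bfG\bfU \succeq 0$; orthogonality of $\bfU$ yields $\log\det(\bfI - \tilde\bfG) = \log\det(\bfI - \bfC)$, while $\tr(\tilde\bfG\,\bfU\bfD\bfU^{\tp}) = \tr(\bfC\bfD) = \sum_i \bfC_{i,i}\bfD_{i,i}$ depends only on the diagonal of $\bfC$. Hadamard's inequality, applied to the positive definite matrix $\bfI - \bfC$, gives $\log\det(\bfI - \bfC) \leq \sum_i \log(1 - \bfC_{i,i})$, with equality iff $\bfC$ is diagonal. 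Since the linear term is unaffected by the off-diagonal entries of $\bfC$, any optimal $\bfC$ must be diagonal, i.e.\ $\tilde\bfG = \bfU\bfC\bfU^{\tp}$ with $\bfC$ diagonal and entries in $[0,1)$.

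It then remains to solve the separable scalar problem $\max_{c_i \in [0,1)} \sum_i [\log(1 - c_i) + c_i \bfD_{i,i}]$. The first-order condition $-(1-c_i)^{-1} + \bfD_{i,i} = 0$ gives the unconstrained optimum $c_i = 1 - 1/\bfD_{i,i}$, which is feasible when $\bfD_{i,i} > 1$; when $\bfD_{i,i} \leq 1$ the objective is nonincreasing on $[0,1)$ so the optimum is $c_i = 0$. In either case $1/(1 - c_i^{\star}) = \max\{\bfD_{i,i}, 1\} = \bfL_{i,i}$, hence $(\bfI - \tilde\bfG)^{-1} = \bfU\bfL\bfU^{\tp}$ and transforming back yields
\begin{equation*}
(\bfV - \bfG_\bfV)^{-1} \;=\; \bfV^{-\frac{1}{2}}(\bfI - \tilde\bfG)^{-1}\bfV^{-\frac{1}{2}} \;=\; \bfV^{-\frac{1}{2}}\bfU\bfL\bfU^{\tp}\bfV^{-\frac{1}{2}},
\end{equation*}
as claimed. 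The main conceptual step is the eigenvector alignment via Hadamard's inequality; the reparametrization collapses the problem to a standard $\log\det$ plus linear form, and the remaining derivations are routine first-order calculus.
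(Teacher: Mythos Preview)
Your proof is correct and takes a genuinely different route from the paper's. The paper proceeds via Lagrangian duality: it introduces a multiplier $\bfOmega \in \spsdM$ for the constraint $\bfG \succeq 0$, writes out the KKT stationarity condition $(\bfV-\bfG_\bfV)^{-1} = \bfSigma_{\rm SAM} - \lambda'\bfI + \bfOmega^*$, and then uses complementary slackness $\tr(\bfOmega^*\bfG_\bfV)=0$ to argue that each eigenvector of $\bfV^{-\frac{1}{2}}\bfG_\bfV\bfV^{-\frac{1}{2}}$ with nonzero eigenvalue must lie in the null space of $\bfV^{\frac{1}{2}}\bfOmega^*\bfV^{\frac{1}{2}}$ and hence be an eigenvector of $\bfV^{\frac{1}{2}}(\bfSigma_{\rm SAM}-\lambda'\bfI)\bfV^{\frac{1}{2}}$. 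The eigenvalue formula then falls out of matching the two sides.

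Your reparametrization $\tilde\bfG = \bfV^{-\frac{1}{2}}\bfG\bfV^{-\frac{1}{2}}$ collapses the problem to the canonical form $\log\det(\bfI-\tilde\bfG) + \tr(\tilde\bfG\,\bfU\bfD\bfU^{\tp})$ and replaces the KKT/complementary-slackness machinery with a single application of Hadamard's inequality to force diagonality of $\bfC = \bfU^{\tp}\tilde\bfG\bfU$. This is more elementary and more transparent: the eigenvector alignment is obtained in one stroke rather than through a chain of null-space arguments, and the scalar problems are then routine. The paper's KKT approach, by contrast, is the textbook convex-optimization treatment and would generalize more readily if additional constraints were present; your argument exploits the specific $\log\det$-plus-linear structure more directly. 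Both arrive at the same closed form, and your derivation of $1/(1-c_i^\star) = \max\{\bfD_{i,i},1\}$ matches the paper's $\bfL_{i,i} = \max\{\bfD_{i,i},1\}$ exactly.
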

\begin{proof}
We can rewrite (\ref{eq:reg_cvx_penalty_V}) as
\begin{eqnarray*} 
\min_{ \bfG } && -\log \det \left( \bfV - \bfG \right) + \mbox{tr}( \left( \bfV - \bfG \right) \bfSigma_{\rm SAM})+ \lambda' \mbox{tr}(\bfG) \\
\mbox{s.t.} && \bfG \in \mathbb{S}_+^M
\end{eqnarray*}
Now associate a Lagrange multiplier $\bfOmega \in \spsdM $ with the $\bfG\in \spsdM$ constraint and write down the Lagrangian as
$$ {\cal L}(\bfG, \bfOmega) = -\log \det \left( \bfV - \bfG \right) + \mbox{tr}( \left( \bfV - \bfG \right) \bfSigma_{\rm SAM})+ \lambda' \mbox{tr}(\bfG) - \mbox{tr}(\bfOmega\bfG). $$
Let $\bfOmega^*$ denote the dual solution. By KKT conditions we have: 
\begin{eqnarray}
\nabla_\bfG {\cal L} \Big|_{\bfG_\bfV, \bfOmega^*} & = & ( \bfV -\bfG_\bfV)^{-1} - \bfSigma_{\rm SAM} + \lambda' \bfI - \bfOmega^* = 0 \label{eq:gen_KKT1} \\
\bfOmega^*, \bfG_\bfV & \in & \spsdM \label{eq:gen_KKT3} \\
\mbox{tr}(\bfOmega^*\bfG_\bfV) &=& 0. \label{eq:gen_KKT4}
\end{eqnarray}
Recall that
\begin{equation}
(\bfV - \bfG_\bfV)^{-1} = (\bfV^{\frac{1}{2}} \bfV^{\frac{1}{2}} - \bfG_\bfV)^{-1} = \bfV^{-\frac{1}{2}} ( \bfI - \bfV^{-\frac{1}{2}} \bfG_\bfV \bfV^{-\frac{1}{2}} )^{-1} \bfV^{-\frac{1}{2}}. \label{eq:inv_VG}
\end{equation}
Plugging this into (\ref{eq:gen_KKT1}) we get
$$ \bfV^{-\frac{1}{2}} (\bfI - \bfV^{-\frac{1}{2}} \bfG_\bfV \bfV^{-\frac{1}{2}})^{-1} \bfV^{-\frac{1}{2}}  = \bfSigma_{\rm SAM} - \lambda' \bfI + \bfOmega^* $$
and so
\begin{eqnarray} \label{eq:I_LGL}
\left(\bfI - \bfV^{-\frac{1}{2}} \bfG_\bfV \bfV^{-\frac{1}{2}} \right)^{-1} & = & \bfV^{\frac{1}{2}} \left( \bfSigma_{\rm SAM} - \lambda' \bfI + \bfOmega^* \right) \bfV^{\frac{1}{2}} \\
& = & \bfV^{\frac{1}{2}} \left( \bfSigma_{\rm SAM} - \lambda' \bfI \right) \bfV^{\frac{1}{2}} + \bfV^{\frac{1}{2}} \bfOmega^* \bfV^{\frac{1}{2}}. \nonumber
\end{eqnarray}
By (\ref{eq:gen_KKT3}), both $ \bfV^{-\frac{1}{2}} \bfG_\bfV \bfV^{-\frac{1}{2}}$ and $\bfV^{\frac{1}{2}} \bfOmega^* \bfV^{\frac{1}{2}}$ are in $\spsdM$.
Let an eigendecomposition of \\
$\bfV^{-\frac{1}{2}} \bfG_\bfV \bfV^{-\frac{1}{2}} $ be $\bfA \bfQ \bfA^{\rm T}$ for which $\bfA=[ \bfa_1 \quad \ldots \quad \bfa_M ]$ is orthonormal and $\bfQ_{i,i}\geq 0, i=1,\ldots, M$. Using (\ref{eq:gen_KKT4}) and the fact that trace is invariant under cyclic permutations, we have
\begin{eqnarray*}
0 & = & \mbox{tr}(\bfOmega^*\bfG_\bfV) = \mbox{tr}(\bfOmega^* \bfV^{\frac{1}{2}} \bfV^{-\frac{1}{2}} \bfG_\bfV \bfV^{-\frac{1}{2}} \bfV^{\frac{1}{2}} ) \\
& = & \mbox{tr}\left(  ( \bfV^{\frac{1}{2}} \bfOmega^* \bfV^{\frac{1}{2}} ) ( \bfV^{-\frac{1}{2}} \bfG_\bfV \bfV^{-\frac{1}{2}} ) \right) = \mbox{tr}\left(  ( \bfV^{\frac{1}{2}} \bfOmega^* \bfV^{\frac{1}{2}} ) \bfA \bfQ \bfA^{\rm T} \right) \\
& = & \mbox{tr}\left(  \bfA^{\rm T}( \bfV^{\frac{1}{2}} \bfOmega^* \bfV^{\frac{1}{2}} ) \bfA \bfQ \right)
= \sum_{i=1}^M \bfQ_{i,i} \bfa_i^{\rm T} ( \bfV^{\frac{1}{2}} \bfOmega^* \bfV^{\frac{1}{2}} ) \bfa_i.
\end{eqnarray*}
Since $\bfQ_{i,i} \geq 0$ and $\bfV^{\frac{1}{2}} \bfOmega^* \bfV^{\frac{1}{2}} \in \spsdM$, we can deduce
$$ \bfa_i^{\rm T} (\bfV^{\frac{1}{2}} \bfOmega^* \bfV^{\frac{1}{2}} ) \bfa_i = 0 \mbox{ if } \bfQ_{i,i}>0, \quad \forall i=1,\ldots, M .$$
Let ${\cal I}_+ = \{ i: \bfQ_{i,i} > 0 \}$. Because $\bfV^{\frac{1}{2}}\bfOmega^* \bfV^{\frac{1}{2}} $ is positive semidefinite, for all $i_0\in {\cal I}_+$ we also have $ \bfV^{\frac{1}{2}} \bfOmega^* \bfV^{\frac{1}{2}} \bfa_{i_0}=0$. Furthermore, since
$$ (\bfI- \bfV^{-\frac{1}{2}} \bfG_\bfV \bfV^{-\frac{1}{2}} )^{-1} = \bfA \mbox{diag}\left(\frac{1}{1-\bfQ_{1,1}},\ldots, \frac{1}{1-\bfQ_{M,M}} \right) \bfA^{\rm T}$$
multiplying both sides of (\ref{eq:I_LGL}) by $\bfa_{i_0}$ leads to
$$ \frac{\bfa_{i_0} }{1-\bfQ_{i_0,i_0}}  = \bfV^{\frac{1}{2}} \left( \bfSigma_{\rm SAM} - \lambda' \bfI \right) \bfV^{\frac{1}{2}} \bfa_{i_0}  $$
which shows $\bfa_{i_0}$ is an eigenvector of $\bfV^{\frac{1}{2}} \left( \bfSigma_{\rm SAM} -\lambda' \bfI \right) \bfV^{\frac{1}{2}} $.  Recall that $\bfV^{\frac{1}{2}} \left( \bfSigma_{\rm SAM} -\lambda' \bfI \right) \bfV^{\frac{1}{2}} = \bfU \bfD \bfU^\tp$. Without loss of generality, we can take $\bfa_i = \bfu_i $ for all $ i \in \calI_+$.  Indeed, since $ {\cal I}_+ = \{ i: \bfQ_{i,i} > 0 \}$, we have
$$ \bfA \bfQ \bfA^\tp = \sum_{i=1}^M \bfQ_{i,i}\bfa_i \bfa_i^\tp  = \sum_{i\in \calI_+} \bfQ_{i,i} \bfa_i \bfa_i^\tp + \sum_{j \notin \calI_+} 0 \cdot \bfa_j \bfa_j^\tp = \sum_{i\in \calI_+} \bfQ_{i,i} \bfu_i \bfu_i^\tp + \sum_{j \notin \calI_+} 0 \cdot \bfu_j \bfu_j^\tp $$
which implies we can further take $\bfA=\bfU$. This gives
\begin{equation}
(\bfI- \bfV^{-\frac{1}{2}} \bfG_\bfV \bfV^{-\frac{1}{2}} )^{-1}= \bfU \bfL \bfU^{\rm T}, \label{eq:ULU}
\end{equation}
where $\bfL$ is a diagonal matrix with entries $\bfL_{i,i} = \frac{1}{1-\bfQ_{i,i}} $, for $i=1,\ldots, M$. Plugging this into (\ref{eq:I_LGL}) we have
\begin{equation} \label{eq:BTB}
\bfU \bfL \bfU^{\rm T} = \bfU \bfD \bfU^{\rm T} + \bfV^{\frac{1}{2}} \Omega^* \bfV^{\frac{1}{2}}.
\end{equation}
For any $i_0 \in {\cal I}_+$, multiplying the both sides of the above equation by $\bfu_{i_0}$ results in
$$ \bfL_{i_0,i_0} \bfu_{i_0}= \bfD_{i_0,i_0} \bfu_{i_0} + 0 $$
which implies $ \bfL_{i_0,i_0} = \bfD_{i_0,i_0} $, or more generally
$$ \bfL_{i,i} = \left\{ \begin{array}{rl}
\bfD_{i,i} & \mbox{if } \bfQ_{i,i} >0 \\
1 & \mbox{otherwise} \\
\end{array} \right.
,\quad i=1,\ldots, M.$$
Since $\bfL_{i,i} = \frac{1}{1 - \bfQ_{i,i}} \geq 1$, to see  $\bfL_{i,i} = \max \left\{ \bfD_{i,i}, 1 \right\}$, it remains to show $\bfL_{i,i} \geq \bfD_{i,i}$ for all $i$.
This follows by rearranging (\ref{eq:BTB})
\begin{eqnarray*}
&& \bfU \bfL \bfU^{\rm T} - \bfU \bfD \bfU^{\rm T} = \bfV^{\frac{1}{2}} \bfOmega^* \bfV^{\frac{1}{2}} \succeq 0 \\
& \Rightarrow & \bfL - \bfD \succeq 0 \Rightarrow \bfL_{i,i} \geq \bfD_{i,i}, \quad \forall i=1,\ldots, M.
\end{eqnarray*}
Finally, plugging (\ref{eq:ULU}) into (\ref{eq:inv_VG}) completes the proof.
\end{proof}

\begin{theorem_dup}
$\bfSigma_{\rm SAM}$ and $\bfSigma_{\rm UTM}^\lambda$ share the same
trace and eigenvectors, and letting the eigenvalues of the two matrices, sorted in decreasing order, be denoted by $s_1,\ldots,s_M$ and $h_1,\ldots,h_M$, respectively, we have
$$h_m = \max \left\{ s_m - \frac{2\lambda}{N},\frac{1}{\hat{v}} \right\}, \mbox{ for } m=1,\ldots, M. $$
\end{theorem_dup}
\begin{proof}
Let $\bfU \diag(s_1, \ldots, s_M) \bfU^{\rm T}$ be an eigendecomposition of $\bfSigma_{\rm SAM}$ such that $\bfU$ is orthonormal.
Define $\bfV=\hat{v}\bfI$, and note that an eigendecomposition of matrix
$$ \bfV^{\frac{1}{2}} \left( \bfSigma_{\rm SAM} - \frac{2\lambda}{N} \bfI \right) \bfV^{\frac{1}{2}} = \hat{v} \left( \bfSigma_{\rm SAM} - \frac{2\lambda}{N} \bfI \right)$$
can be written as $\bfU\bfD\bfU^{\rm T}$,  where $ \bfD_{i,i} = \hat{v} (s_i -  2\lambda/N), i=1,\ldots, M$. By Lemma \ref{lem:GV} we have
$$ \bfSigma_{\rm UTM}^\lambda = (\hat{v}\bfI - \hat{\bfG})^{-1} = \bfV^{-\frac{1}{2}} \bfU \bfL \bfU^{\rm T} \bfV^{-\frac{1}{2}} = \frac{1}{\hat{v}} \bfU \bfL \bfU^{\rm T}$$
where $\bfL \in \mathbb{D}^M_+$, and $\bfL_{i,i} = \max\{ \bfD_{i,i} , 1\} = \max\{ \hat{v}(s_i-2\lambda/N) ,1 \} = \hat{v} \max\{ s_i - 2\lambda/N , 1/\hat{v} \} = \hat{v} h_i, \forall i=1,\ldots, M$. Therefore, 
$$ \bfSigma_{\rm UTM}^\lambda = \frac{1}{\hat{v}} \bfU \bfL \bfU^{\rm T} = \bfU \bfH \bfU^{\rm T}.$$
where $\bfH =\diag(h_1,\ldots, h_M)$, as desired.

Furthermore, recall that we impose no constraint on $v$ when solving UTM, and as a result the partial derivative of the objective function with respect to $v$ should vanish at $\hat{v}$. That is,
$$ \frac{ \partial }{\partial v} \left( \log p({\cal X} | \bfSigma) - \lambda {\rm tr}( {\bfG} )\right) \Big|_{\hat{\bfG},\hat{v}} = -\frac{N}{2} \left( \mbox{tr}(\bfSigma_{\rm SAM}) - \mbox{tr}\left( ( \hat{v}\bfI-\hat{\bfG})^{-1} \right) \right) = 0 $$
which implies $\mbox{tr}(\bfSigma_{\rm SAM}) = \mbox{tr}\left( ( \hat{v}\bfI-\hat{\bfG})^{-1} \right) = \mbox{tr}\left( \bfSigma_{\rm UTM}^\lambda \right)$.
\end{proof}

\begin{theorem_dup}
For all $K$, scalars $\ell_1 > \ell_2 > \cdots > \ell_K > \sigma^2 > 0$, $\rho \in (0,1)$, sequences $N_{(M)}$ such that $|M/N_{(M)} - \rho| = o(1/\sqrt{N_{(M)}})$, covariance matrices $\bfSigma_*^{(M)} \in \spsdM$ with eigenvalues $\ell_1,\ldots,\ell_K,\sigma^2,\ldots, \sigma^2$, and $i$ such that
$\ell_i > (1+\sqrt{\rho}) \sigma^2$, there exists $\epsilon_i \in (0,2\sigma^2/(\ell_i-\sigma^2))$ such that  
$h_i^* \overset{p}{\longrightarrow} s_i - (2+\epsilon_i) \rho \sigma^2$.
\end{theorem_dup}
\begin{proof}
Let $\bfA \bfL \bfA^{\rm T}$ be an eigendecomposition of $\bfSigma_*$, where $\bfA = [\bfa_1 \quad \ldots \quad \bfa_M ]$ is orthonormal and $\bfL = \diag(\ell_1, \ldots, \ell_K, \sigma^2, \ldots, \sigma^2)$.
Recall that
\begin{eqnarray}
h_i^* & = & \bfb_i^{\rm T} \bfSigma_* \bfb_i = \bfb_i^{\rm T} \left( \sum_{j=1}^K \bfa_j \ell_j \bfa_j^{\rm T} + \sum_{j=K+1}^M \bfa_j \sigma^2 \bfa_j^{\rm T} \right) \bfb_i \nonumber \\
& = & \sum_{j=1}^K \ell_j ( \bfb_i^{\rm T} \bfa_j )^2 + \sigma^2 \sum_{j=K+1}^M ( \bfb_i^{\rm T} \bfa_j )^2. \label{eq:h_decomposed}
\end{eqnarray}
Using Theorem 4 in \citet{Paul07}, we have
\begin{equation} \label{eq:h_term1}
( \bfb_i^{\rm T} \bfa_i )^2 \overset{\rm a.s.}{\longrightarrow} \left( 1 - \frac{\rho \sigma^4}{ (\ell_i - \sigma^2)^2 } \right) \big/ \left( 1 + \frac{\rho \sigma^2}{ \ell_i - \sigma^2 } \right).
\end{equation}
Furthermore, decomposing $\bfb_i$ into $\tilde{\bfb}_i + \tilde{\bfb}_i^\perp$, where $\tilde{\bfb}_i \in \mbox{span}(\bfa_1,\ldots, \bfa_K)$ and $\tilde{\bfb}_i^\perp \in \mbox{span}(\bfa_{K+1},$ $\ldots,$ $\bfa_M)$, by Theorem 5 in \citet{Paul07} we have
$ \frac{ \tilde{\bfb}_i }{ \| \tilde{\bfb}_i \|}  \overset{\rm p}{\longrightarrow} \bfa_i $. This implies if $1\leq j \leq K, j\neq i$,
\begin{equation} \label{eq:h_term2}
\bfb_i^{\rm T} \bfa_j = \tilde{\bfb}_i^{\rm T} \bfa_j \overset{\rm p}{\longrightarrow} \| \tilde{\bfb}_i \| \bfa_i^{\rm T} \bfa_j = 0
\end{equation}
and for $K<j\leq M$,
\begin{equation} \label{eq:h_term3}
\sum_{j=K+1}^M ( \bfb_i^{\rm T} \bfa_j )^2 = 1 -  \left\| \sum_{j=1}^K \bfb_i^{\rm T} \bfa_j \right\|^2  \overset{\rm p}{\longrightarrow} 1 - (\bfb_i^{\rm T} \bfa_i)^2.
\end{equation}

Plugging (\ref{eq:h_term1}), (\ref{eq:h_term2}), and (\ref{eq:h_term3}) into (\ref{eq:h_decomposed}) we arrive at
\begin{eqnarray} \label{eq:hl}
h_i^* & \overset{\rm p}{\longrightarrow} & \ell_i \left( 1 - \frac{\rho \sigma^4}{ (\ell_i - \sigma^2)^2 } \right) \big/ \left( 1 + \frac{\rho \sigma^2}{ \ell_i - \sigma^2 } \right) + \sigma^2 \left( 1- \left( 1 - \frac{\rho \sigma^4}{ (\ell_i - \sigma^2)^2 } \right) \big/ \left( 1 + \frac{\rho \sigma^2}{ \ell_i - \sigma^2 } \right) \right) \nonumber \\
& = & \frac{\ell_i}{1+ \rho \sigma^2 / (\ell_i-\sigma^2) } \nonumber \\
& = & \ell_i - \left( 1 + \frac{(1-\rho) \sigma^2}{\ell_i- (1-\rho)\sigma^2 } \right) \rho\sigma^2 .
\end{eqnarray}

By Theorem 1.1 in \citet{Baik06} we have
\begin{equation} \label{eq:sl}
s_i \overset{\rm a.s}{\longrightarrow} \ell_i + \frac{\rho \ell_i \sigma^2 }{ \ell_i - \sigma^2} 
 = \ell_i + \left( 1 + \frac{\sigma^2}{\ell_i-\sigma^2} \right) \rho \sigma^2 .
\end{equation}
Finally, combining (\ref{eq:hl}) and (\ref{eq:sl}) yields
$$ h_i^* \overset{\rm p}{\longrightarrow} s_i  - (2 + \epsilon_i) \rho \sigma^2 $$
where $ \epsilon_i =  \frac{(1-\rho) \sigma^2}{\ell_i- (1-\rho)\sigma^2 } + \frac{\sigma^2}{\ell_i-\sigma^2} $. It is easy to see $0< \epsilon_i  < \frac{\sigma^2}{\ell_i-\sigma^2} + \frac{\sigma^2}{\ell_i-\sigma^2} $, as desired.
\end{proof}

\begin{proposition_dup}
For any fixed $N, K$, scalars $\ell_1 \geq \ell_2 \geq \cdots \geq \ell_K \geq \sigma^2 > 0$, and any sequence of covariance matrices $\bfSigma_*^{(M)} \in \spsdM$ with eigenvalues $\ell_1,\ldots,\ell_K,\sigma^2,\ldots, \sigma^2$, we have
$$ \frac{ {\rm tr} \bfSigma_{\rm SAM} }{ {\rm tr} \bfSigma_* } \overset{a.s.}{\longrightarrow} 1$$
as $M \tends \infty$ and 
$$ {\rm Pr} \left( \left| \frac{ {\rm tr} \bfSigma_{\rm SAM} }{ {\rm tr} \bfSigma_* } - 1 \right| \geq \epsilon \right) \leq 2\exp\left( - N \epsilon^2 \Omega(M) \right). $$
\end{proposition_dup}
\begin{proof}
Let $\bfA \bfL \bfA^{\rm T}$ be an eigendecomposition of $\bfSigma_*$, where $\bfA = [\bfa_1 \quad \ldots \quad \bfa_M ]$ is orthonormal and $\bfL = \diag(\ell_1, \ldots, \ell_K, \sigma^2, \ldots, \sigma^2)$. Thus,
$\mbox{tr} \bfSigma_* = \mbox{tr} \bfL = \sum_{k=1}^K \ell_k + (M-K) \sigma^2 $.
Note that
$$\mbox{tr} \bfSigma_{\rm SAM} = \mbox{tr} ( \bfA \bfA^{\rm T} \bfSigma_{\rm SAM} ) = \mbox{tr} ( \bfA^{\rm T} \bfSigma_{\rm SAM} \bfA  ) = \sum_{i=1}^M \bfa_i^{\rm T} \bfSigma_{\rm SAM} \bfa_i = \sum_{i=1}^M \frac{1}{N}  \sum_{n=1}^N \left( \bfa_i^{\rm T} \bfx_{(n)} \right)^2. $$
Since $\bfx_{(n)} \sim {\cal N}(0, \bfSigma_*) $, we can think of each $\bfx_{(n)}$ as generated by
$ \bfx_{(n)} = \bfA \bfz_{(n)}$,
where $\bfz_{(n)}$ is sampled iid from ${\cal N}(0, \bfL)$. This leads to
$$ \mbox{tr} \bfSigma_{\rm SAM} = \sum_{i=1}^M \frac{1}{N} \sum_{n=1}^N \bfz_{(n),i}^2 = \sum_{i=1}^M \frac{ \bfL_{i,i} }{N} w_i^2, $$
where $w_i^2$'s are i.i.d. samples from $\chi^2_N$. Therefore,
$$ \lim_{M\tends \infty} \frac{ {\rm tr} \bfSigma_{\rm SAM} }{ {\rm tr} \bfSigma_* } = \lim_{M\tends \infty} \frac{ \sum_{i=1}^M \frac{ \bfL_{i,i} }{N} w_i^2 }{ \sum_{k=1}^K \ell_k + (M-K) \sigma^2 }
= \lim_{M\tends \infty} \frac{ \sum_{k=1}^K  \ell_k \frac{w_i^2}{N} +  \sum_{i=K+1}^M \sigma^2 \frac{ w_i^2 }{N} }{ \sum_{k=1}^K \ell_k + (M-K) \sigma^2 }. $$
Since the first terms in the denominator and the numerator are bounded and do not scale with $K$, we can drop them in the limit and rewrite
$$ \lim_{M\tends \infty} \frac{ {\rm tr} \bfSigma_{\rm SAM} }{ {\rm tr} \bfSigma_* } = \lim_{M\tends \infty} \frac{ \sum_{i=K+1}^M \sigma^2 \frac{ w_i^2 }{N} }{ (M-K) \sigma^2 } = \lim_{M\tends \infty} \frac{1}{M-K} \sum_{i=K+1}^M
\frac{w_i^2}{N} = 1 \mbox{ (w. p. 1)} $$
due to the strong law of large numbers and the fact that $\E[w_i^2]=N$.

To prove the second part of the proposition, let us rewrite $ w_i^2 = \sum_{n=1}^N \tilde{w}_{i,n}^2 $, where $\tilde{w}_{i,n}$ are i.i.d samples from $\calN(0,1)$. Therefore,
$$ \tr \bfSigma_{\rm SAM} - \tr \bfSigma_* = \sum_{i=1}^M \sum_{n=1}^N \frac{ \bfL_{i,i} }{N} \tilde{w}^2_{i,n} -\sum_{i=1}^M \bfL_{i,i} = \sum_{i=1}^M \sum_{n=1}^N \frac{ \bfL_{i,i} }{N} \left( \tilde{w}^2_{i,n} -1 \right). $$
By the exponential inequality for chi-square distributions \citep{Laurent00}, we have 
$$ {\rm Pr}( \left| \tr \bfSigma_{\rm SAM} - \tr \bfSigma_* \right| \geq 2 \xi \sqrt{\tau} ) \leq 2 \exp(-\tau), \quad \forall \tau > 0, $$
where $\xi = \sqrt{  \sum_{i=1}^M \sum_{n=1}^N \left( \frac{\bfL_{i,i}}{N} \right)^2 } = \sqrt{ \frac{1}{N} \sum_{i=1}^M \bfL_{i,i}^2 } $. Taking $\tau = \left( \frac{ \epsilon \tr \bfSigma_*}{ 2 \xi } \right)^2 $, we can rewrite the above inequality as
$$ {\rm Pr} \left( \left| \frac{ {\rm tr} \bfSigma_{\rm SAM} }{ {\rm tr} \bfSigma_* } - 1 \right| \geq \epsilon \right) \leq 2 \exp\left(  -\left( \frac{ \epsilon \tr \bfSigma_*}{ 2 \xi } \right)^2 \right) = 2 \exp \left( - N \epsilon^2 \frac{ \left( \sum_{i=1}^M \bfL_{i,i} \right)^2  }{ 4 \sum_{i=1}^M \bfL_{i,i}^2  } \right) . $$
The desired result then follows straightforwardly from the fact that
$$ \frac{ \left( \sum_{i=1}^M \bfL_{i,i} \right)^2  }{ \sum_{i=1}^M \bfL_{i,i}^2  } = \frac{ \left( M\sigma^2 + \kappa_1 \right)^2 }{ M\sigma^4 + \kappa_2 } = \Omega(M), $$
since $\kappa_1 = \sum_{i=1}^K (\ell_i-\sigma^2)$ and $\kappa_2 = \sum_{i=1}^K (\ell_i^2-\sigma^4) $ are both constants.
\end{proof}

\begin{proposition_dup}
Suppose $\bfR_* = \diag( r,1,1,\ldots, 1)$ and $\bfF_* = {\bf 1}{\bf 1}^\tp$, $r > 1$. Let $\bff = [f_1 \quad \ldots \quad f_M]^{\rm T}$ be the top eigenvector of $\bfSigma_*$. Then we have $ f_1 / f_i = \Omega(r), \forall i>1$.
\end{proposition_dup}
\begin{proof}
Note that $f$ is the solution to the following optimization problem
\begin{eqnarray*}
\max_{\bfu \in \mathbb{R}^M} && \bfu^{\rm T} (\bfR_*+\bfF_*) \bfu \\
\mbox{s.t.} && \|\bfu\|_2 = 1
\end{eqnarray*}
and the objective function can written be as $ r f_1^2 + \sum_{i=2}^M f_i^2 + \left(\sum_{i=1}^M f_i \right)^2$. By symmetry, we have $f_2=f_3=\ldots=f_M$. To simplify notation, let us represent $\bff$ as $[x \quad y \quad y \quad \ldots \quad y]^{\rm T} $. Suppose the largest eigenvalue is $q$. By definition we have $(\bfR_*+\bfF_*)\bff = q \bff$, or equivalently
\begin{eqnarray*}
(r+1)x + (M-1)y &=& q x \\
x + M y &=& q y.
\end{eqnarray*}
Solving the above equations leads to
$$ q = \frac{1}{2} \left(  M+r+1+ \sqrt{ (M+r+1)^2-4(Mr+1) } \right) = \Omega(r), $$
and plugging this back to the above equations yields
$$ x/y = q-M = \Omega(r).$$
\end{proof}

\begin{proposition_dup}
Suppose $\bfR_*$ and $\bfF_*$ are given as in Proposition \ref{prop:MRH-fail}, and let the estimate resulting from ($\ref{eq:modified_TM}$) be $\hat{\bfSigma} = \bfR_* + \hat{\bfF}$. Then for all $\lambda>0$ we have
\begin{enumerate}
\item $\rank(\hat{\bfF})=1$ if and only if $\lambda < MN/2$.
\item In that case, if we rewrite $\hat{\bfF}$ as $\bff \bff^{\rm T}$, where $\bff=[f_1 \quad \ldots \quad f_M]^{\rm T}$, then $\forall i >1$, $f_1/f_i$ is greater than 1 and monotonically increasing with $r$. Furthermore, if $\lambda > (M-1)N/2$, then $f_1/f_i = \Omega(r)$.
\end{enumerate}
\end{proposition_dup}

\begin{proof}
Let $\lambda'=2\lambda/N$, $\bfC=\bfR_*^{-\frac{1}{2}} \left( \bfSigma_* - \lambda' \bfI \right) \bfR_*^{-\frac{1}{2}}$, $\bfU \bfD \bfU^{\rm T}$ be an eigendecomposition of $\bfC$, and $\bfL$ be the diagonal matrix with entries $ \bfL_{i,i} = \max \left\{ \bfD_{i,i}, 1 \right\}, \forall i=1,\ldots, M.$
Applying Lemma \ref{lem:GV} with $\bfSigma_{\rm SAM} = \bfSigma_*$ and $\bfV = \bfR_*^{-1}$ we have
$$ \hat{\bfSigma} = \bfR_*^{\frac{1}{2}} \bfU \bfL \bfU^{\rm T} \bfR_*^{\frac{1}{2}} = \bfR_*^{\frac{1}{2}} \bfU \bfI \bfU^{\rm T} \bfR_*^{\frac{1}{2}} + \bfR_*^{\frac{1}{2}} \bfU ( \bfL - \bfI ) \bfU^{\rm T} \bfR_*^{\frac{1}{2}} = \bfR_* + \bfR_*^{\frac{1}{2}} \bfU ( \bfL - \bfI ) \bfU^{\rm T} \bfR_*^{\frac{1}{2}} $$
and therefore
$$ \hat{\bfF} = \bfR_*^{\frac{1}{2}} \bfU ( \bfL - \bfI ) \bfU^{\rm T} \bfR_*^{\frac{1}{2}}. $$
Since $\bfL_{i,i} = \max \left\{ \bfD_{i,i}, 1 \right\}$, we further have $\mbox{rank}(\hat{\bfF}) = \mbox{rank}(\bfL - \bfI ) = | \left\{i: \bfD_{i,i}>1 \right\} | $.
Recall that $\bfD$ denotes the eigenvalues of matrix $\bfC$, which can be written as 
\begin{eqnarray*}
\bfR_*^{-\frac{1}{2}} \left( \bfSigma_* - \lambda' \bfI \right) \bfR_*^{-\frac{1}{2}} & = & \bfR_*^{-\frac{1}{2}} \bfSigma_* \bfR_*^{-\frac{1}{2}} - \lambda' \bfR_*^{-1} \\
& = & \bfR_*^{-\frac{1}{2}}( \bfR_* + \bfF_* ) \bfR_*^{-\frac{1}{2}} - \lambda' \left( \bfI - \left(1-\frac{1}{r} \right) \bfe_1\bfe_1^{\rm T} \right) \\
& = & \bfI + \bfa \bfa^{\rm T} - \lambda' \bfI + \lambda' \left( 1-\frac{1}{r} \right)\bfe_1 \bfe_1^{\rm T} \\
& = & (1-\lambda') \bfI + \bfa \bfa^{\rm T} + \lambda' \left( 1-\frac{1}{r} \right)\bfe_1 \bfe_1^{\rm T}
\end{eqnarray*}
where $\bfa = \left[ \frac{1}{\sqrt{r}} \quad 1 \quad \ldots \quad 1 \right]^{\rm T}$ and $\bfe_1= [ 1 \quad 0 \quad \ldots \quad 0 ]^{\rm T}$. Let $\bfA = \bfa \bfa^{\rm T} + \lambda' \left( 1-\frac{1}{r} \right)\bfe_1 \bfe_1^{\rm T}$. Since $\bfC = (1-\lambda')\bfI + \bfA$, we know $\bfC$ and $\bfA$ share the same eigenvectors, and the corresponding eigenvalues differ by $(1-\lambda')$. Thus, the number of the eigenvalues of $\bfC$ that are greater than 1 is equal to the number of the eigenvalues of $\bfA$ that are greater than $\lambda'$. However, $\mbox{rank}(\bfA)= \mbox{rank}( \bfa \bfa^{\rm T} + \lambda' \left( 1-\frac{1}{r} \right)\bfe_1 \bfe_1^{\rm T} ) = 2$, which implies $\bfA$ has only 2 non-zero eigenvalues. Let $q$ be one of them. By symmetry, we can denote the corresponding eigenvector by $\bfu= [ x \quad y \quad \ldots \quad y ]^{\rm T}$.
Then we have $ \bfA \bfu = q \bfu $, which leads to
\begin{eqnarray}
\frac{x}{r} + \frac{ (M-1) y }{\sqrt{r}} + \left( \lambda'- \frac{\lambda'}{r} \right)x & = & qx \nonumber \\
\frac{x}{\sqrt{r}} + (M-1) y & = & qy. \label{eq:qxy}
\end{eqnarray}
After eliminating $x$ and $y$ we arrive at the following equation
$$ rq^2 - ( (M-1)r + (r-1)\lambda' + 1 )q + \lambda'(M-1)(r-1) = 0. $$
Let $s(q)$ denote the left-hand-side of the above equation. It is easy to see that its discriminant $\Delta > 0$, and therefore the equation $s(q)=0$ has two distinct real roots, each corresponding to one of the non-zero eigenvalues of $\bfA$. Recall that $\rank(\hat{\bfF})$ equals the number of the eigenvalues of $\bfA$ that are greater than $\lambda'$, which is equal to the number of the roots of $s(q)=0$ that are greater than $\lambda'$. Thus, $\rank(\hat{\bfF}) = 1 $ if and only if one root of $s(q)=0$ is greater than $\lambda'$ and the other is less than $\lambda'$. This is equivalent to
$$ s(\lambda') < 0 \iff \lambda'(\lambda'-M) < 0 \iff \lambda' < M \iff \lambda < \frac{MN}{2}. $$

To prove the second part of this proposition, let $q_+$ be the greatest eigenvalue of $\bfA$, or equivalently the greater root of $s(q)=0$, and $\bfu_+$ be the corresponding eigenvector. That is,
$$ q_+ = \frac{1}{2r} \left( (M-1)r + (r-1)\lambda' + 1 + \sqrt{\Delta} \right) $$
and $ \bfu_+ = [ x_+ \quad y_+ \quad \ldots \quad y_+ ]^{\rm T}$, where $(x_+,y_+)$ is a solution of $(x,y)$ in ($\ref{eq:qxy}$) given $q=q_+$.
Recall that
$$ \hat{\bfF} = \bfR_*^{\frac{1}{2}} \bfU ( \bfL - \bfI ) \bfU^{\rm T} \bfR_*^{\frac{1}{2}} = \bfR_*^{\frac{1}{2}} \left( ( q_+ - \lambda' ) \bfu_+ \bfu_+^{\rm T} \right) \bfR_*^{\frac{1}{2}} , $$
which leads to $\bff= (q_+ - \lambda' )^{\frac{1}{2}} \bfR_*^{\frac{1}{2}} \bfu_+ = (q_+ - \lambda')^{\frac{1}{2}} [ \sqrt{r}x_+ \quad y_+ \quad \ldots \quad y_+ ]$, and
$$ \frac{f_1}{f_i} = \frac{ \sqrt{r} x_+ }{ y_+} = r(q_++1-M), \quad \forall i>1. $$
It is easy to show $ \lim_{ \lambda' \tends 0^+ } q_+ = \frac{1}{r} + M-1 $ and as a result $ \lim_{ \lambda' \tends 0^+ } \frac{f_1}{f_i} = 1$. Therefore, to show $\frac{f_1}{f_i}$ is greater than 1 and monotonically increasing with $r$, it is sufficient to show that
$$ \frac{ d \frac{f_1}{f_i} }{d r } > 0, \quad \forall r > 1, \lambda' \in (0, M) . $$
By straight forward algebra we have
$$ \frac{ d \frac{f_1}{f_i} }{d r } = (\lambda' - M + 1 ) + \frac{1}{\sqrt \Delta} \left( ( \lambda'r-\lambda'- Mr + r )(\lambda'-M+1) + (M-1+\lambda')  \right). $$
Now consider three cases:
\begin{packed_enum}
\item $ 0 < \lambda < \frac{(M-1)N}{2} $ : In this case $ 0 < \lambda' < M-1$ and
\begin{eqnarray*}
\frac{ d \frac{f_1}{f_i} }{d r } > 0 & \iff &
\left( ( \lambda'r-\lambda'- Mr + r )(\lambda'-M+1) + (M-1+\lambda')  \right) > (M-1-\lambda'){\sqrt \Delta} \\
& \iff & \left( -\lambda'r + \lambda' + Mr - r  + \frac{M-1+\lambda'}{M-1-\lambda'}  \right)^2 > \Delta .
\end{eqnarray*}
Expanding the both sides of the last inequality yields the desired result.
\item $\lambda = \frac{(M-1)N}{2}$ : In this case $\lambda' = M-1$ and $\frac{ d \frac{f_1}{f_i} }{d r } = \frac{2(M-1) }{\sqrt \Delta } > 0$, as desired.
\item $ \frac{(M-1)N}{2} < \lambda < \frac{MN}{2} $ : In this case $\lambda' - M + 1 \in (0,1)$, and we have 
\begin{eqnarray*}
&& ( \lambda'r-\lambda'- Mr + r )(\lambda'-M+1) + (M-1+\lambda') \\
& = & r(\lambda'-M+1)^2 - \lambda'(\lambda'-M+1) + (\lambda'+M-1) \\
& > & - \lambda' + (\lambda'+M-1) > 0
\end{eqnarray*}
which implies $\frac{ d \frac{f_1}{f_i} }{d r } > (\lambda' - M + 1 )$. Since the derivative is bounded below by a positive constant, we conclude $ \frac{f_1}{f_i} = \Omega(r)$.
\end{packed_enum}
\end{proof}

\section{Experiment Details}

\subsection{Coordinate Ascent Algorithm for STM}

Algorithm \ref{alg:rpca} describes our coordinate ascent method for solving STM.
\begin{algorithm}[h]
\caption{Procedure for solving STM \\ {\bf Input: }${\cal X}, \lambda$ \\ {\bfseries Output: } $\bfSigma_{\rm STM}^{\lambda}$ }
\label{alg:rpca}
\begin{algorithmic}
\STATE $\bfT \leftarrow \bfI $.
\REPEAT
	\STATE $ \bfSigma \leftarrow {\rm UTM} ( \bfT {\cal X}, \lambda ) $
	\STATE $\bfT \leftarrow \argmax_{\bar{\bfT} \in \mathbb{D}_+^M } \log p( \bar{\bfT} {\cal X}|\bfSigma), \mbox{ s.t. } \log\det \bar{\bfT} \geq 0 $
\UNTIL{converge}
\STATE $\bfSigma_{\rm STM}^{\lambda} \leftarrow \bfT^{-1} \bfSigma \bfT^{-\rm T} $
\end{algorithmic}
\end{algorithm}

\subsection{Cross Validation for Synthetic Data Experiment}

For the synthetic data experiment, we select the regularization parameter via the following cross-validation procedure. Let $\theta$ be the regularization parameter to be determined and ${\cal U}({\cal X}, \theta)$ be the learning algorithm that takes as input $({\cal X},\theta)$ and returns a covariance matrix estimate.
We randomly split $\cal X$ into a partial training set ${\cal X}_{\rm T}$ and a validation set ${\cal X}_{\rm V}$, whose sizes are roughly $70\%$ and $30\%$ of $\cal X$, respectively.
For each candidate value of $\theta$, $\hat{\bfSigma}^{\theta}_{\rm T} = {\cal U}({\cal X_{\rm T}},\theta )$ is computed and the likelihood $p({\cal X}_{\rm V}|\hat{\bfSigma}^\theta_{\rm T})$ of the validation set
${\cal X}_{\rm V}$ conditioned on the solution $\hat{\bfSigma}^\theta_{\rm T}$ is evaluated. The value of $\theta$ that maximizes this likelihood is then selected and fed into ${\cal U}({\cal X}, \theta)$ along with the full training set $\cal X$, resulting in our estimate $\hat{\bfSigma}^\theta$.

In our synthetic data experiment, the $K$ for URM/EM/MRH are selected from $\left\{0, 1, \ldots,\right.$ $\left. 15 \right\}$, and the $\lambda$ for UTM/TM/STM are selected from $\{100,120,\ldots,400\}$. These ranges are chosen so that the selected values rarely fall on the extremes.

\subsection{Termination Criteria for Iterative Algorithms}

In our implementation, the EM algorithm terminates when
$ \max_i \left| \frac{ \bfR_{i,i}^{\rm New}-\bfR_{i,i}^{\rm Current} }{ \bfR_{i,i}^{\rm Current} } \right| < 0.001. $
Similarly, the STM algorithm terminates when
$ \max_i \left| \frac{ \bfT_{i,i}^{\rm New}-\bfT_{i,i}^{\rm Current} }{ \bfT_{i,i}^{\rm Current} } \right| < 0.001. $

\subsection{Equivalent Data Requirement}

Consider two learning algorithms $\calU_1$ and $\calU_2$, and $N$ data samples $\calX = \{ \bfx_{(1)}, \ldots, \bfx_{(N)} \}$. Denote the out-of-sample log-likelihood delivered by $\calU$ with $\calX$ by $L(\calU, \calX)$. Suppose $\calU_2$ generally has better performance than $\calU_1$. Algorithm $\ref{alg:edr}$ evaluates the equivalent data requirement of $\calU_2$ with respect to $\calU_1$. In our implementation, we set step size $\alpha=2\%$ for uniform-residual experiment and $\alpha=10\%$ for nonuniform-residual ones.

\begin{algorithm}[h]
\caption{Procedure for evaluating equivalent data requirement \\ {\bf Input: }$\calX, \calU_1, \calU_2$ \\ {\bfseries Output: } $\gamma$ } \label{alg:edr}
\begin{algorithmic}
\STATE $i \leftarrow 0 $
\WHILE {1}
	\STATE $\gamma \leftarrow 1 - i\alpha$
	\STATE $\calX_{i} \leftarrow \{ \bfx_{(1)}, \ldots, \bfx_{(\gamma N)} \} $
	\IF { $L(\calU_2, \calX_i) < L(\calU_1, \calX ) $ }
		\IF { $i > 0$ }
			\STATE $\gamma \leftarrow \gamma + \frac{ L(\calU_1,\calX) - L(\calU_2,\calX_i) }{ L(\calU_2,\calX_{i-1}) - L(\calU_2,\calX_i) }  \alpha $ \quad (interpolation)
		\ENDIF
		\RETURN $\gamma$
	\ENDIF
	\STATE $ i \leftarrow i + 1 $
\ENDWHILE
\end{algorithmic}
\end{algorithm}

\subsection{S\&P500 Data Preprocessing}

Define November 2, 2001 as trading day 1 and August 9, 2007 as trading day 1451. After deleting 47 constituent stocks that are not fully defined over this period, we compute for each stock the normalized log daily returns as follows:
\begin{packed_enum}
	\item Let $y'_{i,j}$ be the adjusted close price of stock $i$ on day $j$,  $i=1,\ldots, 453$ and $j=1,\ldots, 1451$ .
	\item Compute the raw log-daily-return of stock $i$ on day $j$ by
	$$ y''_{i,j} = \log \frac{y'_{i,j+1}}{y'_{i,j}}, \quad i=1, \ldots, 453, \quad j = 1,\ldots, 1450. $$
	\item Let $\bar{y}$ be the smallest number such that at least 99.5\% of all $y''_{i,j}$ are less than or equal to $\bar{y}$. Let $\underline{y}$ be the largest number such that at least 99.5\% of all $y''_{i,j}$'s are greater than or equal to $\underline{y}$. Clip all $y''_{i,j}$ by the interval $[\underline{y}, \bar{y}]$.
	\item Let the volatility of stock $i$ on day $j>50$ be the 10-week rms $ \hat{\sigma}_{i,j} = \sqrt{ \frac{1}{50} \sum\limits_{t=1}^{50} y_{i,j-t}''^2 }$.
	\item Set $ \bfy_{(n)} = \left[ \frac{y''_{1,n+50}}{\hat{\sigma}_{1,n+50}} \quad \ldots \quad \frac{y''_{453,n+50}}{\hat{\sigma}_{453,n+50}} \right]^{\rm T}$ for $n=1, \ldots, 1400$.
\end{packed_enum}

\subsection{Candidates for Regularization Parameter in Real Data Experiment}

In our real data experiment, the $K$ for EM/MRH are selected from $\left\{0, 1, \ldots, 40 \right\}$, and the $\lambda$ for TM/STM are selected from$\{200,210,\ldots, 600\}$. These ranges are chosen so that the selected values never fall on the extremes.

\bibliography{RPCA-ref-02-23-2013}

\begin{thebibliography}{}

\bibitem[\protect\astroncite{Akaike}{1987}]{Akaike87}
Akaike, H. (1987).
\newblock Factor analysis and {AIC}.
\newblock {\em Psychometrika}, 52(3).

\bibitem[\protect\astroncite{Amini and Wainwright}{2009}]{Amini08}
Amini, A.~A. and Wainwright, M.~J. (2009).
\newblock High-dimensional analysis of semidefinite relaxations for sparse
  principal components.
\newblock {\em Ann. Statist.}, 37:2877--2921.

\bibitem[\protect\astroncite{Baik and Silverstein}{2006}]{Baik06}
Baik, J. and Silverstein, J.~W. (2006).
\newblock Eigenvalues of large sample covariance matrices of spiked population
  models.
\newblock {\em Journal of Multivariate Analysis}, 97:2006.

\bibitem[\protect\astroncite{Banerjee et~al.}{2008}]{Banerjee08}
Banerjee, O., Ghaoui, L.~E., and d'Aspremont, A. (2008).
\newblock Model selection through sparse maximum likelihood estimation for
  multivariate gaussian or binary data.
\newblock {\em Journal of Machine Learning Research}, 9:485--516.

\bibitem[\protect\astroncite{Bishop}{1998}]{Bishop98}
Bishop, C.~M. (1998).
\newblock Bayesian {PCA}.
\newblock In Kearns, M.~J., Solla, S.~A., and Cohn, D.~A., editors, {\em
  Advances in Neural Information Processing Systems 11}, pages 382--388. MIT
  Press.

\bibitem[\protect\astroncite{Boyd et~al.}{2011}]{Boyd11}
Boyd, S., Parikh, N., Chu, E., Peleato, B., and Eckstein, J. (2011).
\newblock Distributed optimization and statistical learning via the alternating
  direction method of multipliers.
\newblock {\em Foundations and Trends in Machine Learning}, 3(1):1--122.

\bibitem[\protect\astroncite{Cand{\`e}s et~al.}{2009}]{Candes09}
Cand{\`e}s, E.~J., Li, X., Ma, Y., and Wright, J. (2009).
\newblock Robust principal component analysis?
\newblock {\em Journal of the ACM}, 58(1):1--37.

\bibitem[\protect\astroncite{Chandrasekaran et~al.}{2012}]{Chandrasekaran10}
Chandrasekaran, V., Parrilo, P.~A., and Willsky, A.~S. (2012).
\newblock Latent variable graphical model selection via convex optimization.
\newblock {\em Annals of Statistics}, 40(4):1935--2013.

\bibitem[\protect\astroncite{D'Aspremont et~al.}{2004}]{DAspremont04}
D'Aspremont, A., Ghaoui, L.~E., Jordan, M.~I., and Lanckriet, G. R.~G. (2004).
\newblock A direct formulation for sparse {PCA} using semidefinite programming.
\newblock {\em SIAM Review}, 49(3):434.

\bibitem[\protect\astroncite{Friedman et~al.}{2008}]{Friedman08}
Friedman, J., Hastie, T., and Tibshirani, R. (2008).
\newblock {Sparse inverse covariance estimation with the graphical lasso}.
\newblock {\em Biostatistics}, 9(3):432--441.

\bibitem[\protect\astroncite{Harman}{1976}]{Harman76}
Harman, H.~H. (1976).
\newblock {\em Modern Factor Analysis}.
\newblock University of Chicago Press, third edition.

\bibitem[\protect\astroncite{Hirose et~al.}{2011}]{Hirose11}
Hirose, K., Kawano, S., Konishi, S., and Ichikawa, M. (2011).
\newblock Bayesian information criterion and selection of the number of factors
  in factor analysis models.
\newblock {\em Journal of Data Science}, 9(2).

\bibitem[\protect\astroncite{Johnstone}{2001}]{Johnstone01}
Johnstone, I.~M. (2001).
\newblock On the distribution of the largest eigenvalue in principal components
  analysis.
\newblock {\em Annals of Statistics}, 29:295--327.

\bibitem[\protect\astroncite{Johnstone and Lu}{2007}]{Johnstone07}
Johnstone, I.~M. and Lu, A.~Y. (2007).
\newblock {Sparse Principal Components Analysis}.
\newblock {\em Journal of the American Statistical Association}.

\bibitem[\protect\astroncite{Jolliffe et~al.}{2003}]{Jolliffe03}
Jolliffe, I.~T., Trendafilov, N.~T., and Uddin, M. (2003).
\newblock A modified principal component technique based on the lasso.
\newblock {\em Journal of Computational and Graphical Statistics}, 12:531--547.

\bibitem[\protect\astroncite{Kao and Van~Roy}{2012}]{Kao2012}
Kao, Y.-H. and Van~Roy, B. (2012).
\newblock Directed principle component analysis.

\bibitem[\protect\astroncite{Laurent and Massart}{2000}]{Laurent00}
Laurent, B. and Massart, P. (2000).
\newblock Adaptive estimation of a quadratic functional by model selection.
\newblock {\em Annals of Statistics}, 28(5):1302--1338.

\bibitem[\protect\astroncite{Markowitz}{1952}]{Markowitz52}
Markowitz, H.~M. (1952).
\newblock Portfolio selection.
\newblock {\em Journal of Finance}, 7:77--91.

\bibitem[\protect\astroncite{Minka}{2000}]{Minka00}
Minka, T.~P. (2000).
\newblock Automatic choice of dimensionality for {PCA}.
\newblock In Leen, T.~K., Dietterich, T.~G., and Tresp, V., editors, {\em
  Advances in Neural Information Processing Systems 13}, pages 598--604. MIT
  Press.

\bibitem[\protect\astroncite{Paul}{2007}]{Paul07}
Paul, D. (2007).
\newblock Asymptotics of sample eigenstruture for a large dimensional spiked
  covariance model.
\newblock {\em Statistica Sinica}, 17(4):1617--1642.

\bibitem[\protect\astroncite{Pison et~al.}{2003}]{Pison03}
Pison, G., Rousseeuw, P.~J., Filzmoser, P., and Croux, C. (2003).
\newblock Robust factor analysis.
\newblock {\em Journal of Multivariate Analysis}, 84(1):145 -- 172.

\bibitem[\protect\astroncite{Pourahmadi}{2011}]{Pourahmadi10}
Pourahmadi, M. (2011).
\newblock Covariance estimation: The {GLM} and regularization perspectives.
\newblock {\em Statistical Science}, 26(3):369--387.

\bibitem[\protect\astroncite{Ravikumar et~al.}{2011}]{Ravikumar11}
Ravikumar, P., Raskutti, G., Wainwright, M.~J., and Yu, B. (2011).
\newblock High-dimensional covariance estimation by minimizing {L}1-penalized
  log-determinant.
\newblock {\em Electronic Journal of Statistics}.

\bibitem[\protect\astroncite{Rubin and Thayer}{1982}]{Rubin82}
Rubin, D.~B. and Thayer, D.~T. (1982).
\newblock {EM} algorithm for {ML} factor analysis.
\newblock {\em Psychometrika}, 47(1):69--76.

\bibitem[\protect\astroncite{Tipping and Bishop}{1999}]{Tipping99}
Tipping, M.~E. and Bishop, C.~M. (1999).
\newblock Probabilistic principal component analysis.
\newblock {\em Journal of the Royal Statistical Society, Series B},
  61:611--622.

\bibitem[\protect\astroncite{Xu et~al.}{2010}]{Xu10}
Xu, H., Caramanis, C., and Mannor, S. (2010).
\newblock Principal component analysis with contaminated data: The high
  dimensional case.
\newblock In {\em COLT}, pages 490--502.

\bibitem[\protect\astroncite{Yuan and Lin}{2007}]{Yuan07}
Yuan, M. and Lin, Y. (2007).
\newblock Model selection and estimation in the gaussian graphical model.
\newblock {\em Biometrika}, 94(1):19--35.

\bibitem[\protect\astroncite{Zou et~al.}{2004}]{Zou04}
Zou, H., Hastie, T., and Tibshirani, R. (2004).
\newblock {Sparse Principal Component Analysis}.
\newblock {\em Journal of Computational and Graphical Statistics}, 15.

\end{thebibliography}

\end{document}